\documentclass{article}

\usepackage{iclr2027_conference,times}
\usepackage[utf8]{inputenc}
\usepackage{aliases}
\usepackage{algorithm}
\usepackage{algorithmic}
\usepackage{multirow}
\usepackage{graphicx}
\usepackage{subcaption}
\usepackage{dsfont}
\usepackage{tikz}
\usepackage{wrapfig}
\usetikzlibrary{fit}

\usepackage{math_commands}

\title{COMPOTE: Collapse-Resistant Multiple-Hypothesis Learning for Sequential Recommendation}
\author{
Louis Castres Saint Martin$^{1,2}$,
Cl\'ement Calauz\`enes$^{1}$,
Alain Rakotomamonjy$^{1}$,
Patrick Gallinari$^{1,2}$ \\
$^{1}$Criteo AI Lab, Paris, France \\
$^{2}$Sorbonne Universit\'e, CNRS, ISIR, 75005 Paris, France
}

\iclrfinalcopy

\begin{document}

\maketitle
\fancyhead[L]{}
\begin{abstract}
Next-item prediction is inherently ambiguous: a single interaction history often admits
 multiple plausible continuations across distinct catalogue regions.
  Standard sequential recommenders compress these alternatives into a single prediction, 
  conflating user intents. We introduce \textbf{CO}llapse-resistant \textbf{M}ultiple hy\textbf{POT}hesis l\textbf{E}arning (COMPOTE), 
  a multi-hypothesis transformer trained with an annealed objective that transitions 
  from uniform supervision to a winner-takes-all assignment. This mechanism forces 
  hypotheses to specialize into distinct catalogue regions, while inference 
  relies on an efficient, normalizer-free max-similarity rule.
Theoretically, we show that in the population case, the optimal solutionspartition the conditional 
next-item distribution according to a max-entropy quantization principle,
 while pooled scores preserve the Bayes-optimal ranking. 
 We also prove a generalization bound scaling logarithmically with the number of hypotheses.
Across three Amazon Reviews 2023 benchmarks, COMPOTE outperforms strong ID-based, generative,
 and multi-interest baselines using item IDs alone. Matched-capacity ablations
  confirm that performance gains stem directly from the annealed 
  objective, preventing the head collapse observed in conventional
   multi-head architectures, rather than added capacity, ensembling, or routing.
\end{abstract}
\section{Introduction}
\label{sec:introduction}

Recommender systems filter vast product catalogues where their ranking dictates item visibility, directly driving discovery,
engagement, and revenue~\citep{covington2016youtube}. 
As an example, \emph{Sequential recommendation} predicts the next item from ordered interactions, 
closely mirroring real-world deployment where user preferences evolve 
and recent user's interaction  history is most informative~\citep{rendle2010fpmc}.
 There exists essentially two paradigms for sequential recommendation, both of which learn a single conditional model of the next item.
The first family encodes the history with an autoregressive sequence model and scores items against a learned embedding table, moving from recurrent~\citep{hidasi2016gru4rec} and convolutional~\citep{tang2018caser} encoders to causal self-attention~\citep{kang2018sasrec}, backbone of later variants~\citep{sun2019bert4rec,zhou2020s3rec,zhai2024hstu}.
The second family treats retrieval as generation~\citep{tay2022dsi}: TIGER~\citep{rajput2023tiger} quantizes item content embeddings into \emph{semantic identifiers} and decodes the next item's identifier token by token, which helps on rare items at the cost of a content encoder, a learned tokenizer and beam search.
Both families nonetheless learn a \emph{single} conditional model of the next item: one query representation, one score vector, one ranked list.

\paragraph{Next-item prediction is ambiguous.}
An user's interaction  history rarely determines the next item: it points to a \emph{set} of plausible continuations, often 
in unrelated parts of the catalogue, so the next-item distribution is multimodal and a single query vector trained with cross-entropy 
settles between the modes instead of matching any one.
\emph{Multi-interest learning} addresses this by representing each user with several vectors, extracted by 
dynamic routing over capsules in MIND~\citep{li2019mind}, by multi-head attention with a diversity--accuracy aggregation step in ComiRec~\citep{cen2020comirec}, or by activating a few concepts from a shared pool in SINE~\citep{tan2021sine}.
REMI~\citep{xie2023remi} keeps the same architecture and instead adds interest-aware hard negatives and a regularizer against \emph{routing collapse}, where all interests attend to the same historical item.
These methods already rely on a form of winner-takes-all (WTA) training, but it comes with the routing architecture 
and WTA is rarely examined as an objective in its own right.
As a result, the optimal interests are not characterized, and diversity is usually 
encouraged through the architecture or additional penalties.

\paragraph{Our approach: COMPOTE.}
We frame next-item prediction as an inherently ambiguous learning problem and adopt multiple-choice learning (MCL)~\citep{guzmanrivera2012mcl,lee2016smcl,rupprecht2017mhp},
 where a model outputs $M$ candidate hypotheses, but optimizes only the best-performing head.
While this winner-takes-all objective effectively quantizes the conditional next-item distribution~\citep{letzelter2024}, it is prone to sub-optimal local minima where heads that underperform early permanently deactivate.
Annealed MCL~\citep{perera2024annealedMCL} mitigates this initialization sensitivity by annealing a soft-assignment temperature during training.
We adapt this formulation to discrete catalogues with \textbf{CO}llapse-resistant \textbf{M}ultiple hy\textbf{POT}hesis l\textbf{E}arning (COMPOTE; \cref{sec:problem-statement-learning-problem}).
A shared causal transformer feeds $M$ prediction heads, each defining a distinct distribution over the catalogue, which are
optimized via the objective function
$\ell_\tau$ in \cref{eq:loss-free-energy} where $\tau$ is the temperature parameter.
At high temperatures, all heads update uniformly; as the temperature vanishes, the objective smoothly converges to pure winner-takes-all.
For inference and retrieval, we introduce max-similarity, 
a normalizer-free counterpart to the loss's max-pooling operator (\cref{prop:loss-via-pooling}), 
which matches max-pooling accuracy on standard benchmarks while bypassing catalogue-wide softmax operations (\cref{sec:scaling-normalization}).
\paragraph{Contributions.}
Methodologically, we propose COMPOTE, extending annealed multiple-choice learning to sequential recommendation alongside an  efficient, normalizer-free retrieval rule (\cref{sec:problem-statement-learning-problem}). 
On the theoretical front, we first establish a generalization bound scaling logarithmically in the number of hypotheses $M$
 (\cref{cor:wta-generalization-refined}).
We further characterize the optimal hypotheses in the winner-takes-all limit, showing they partition the 
conditional item distribution into maximum-entropy regions (\cref{thm:closed-form}) and that pooled 
hypothesis scores preserve the Bayes ranking up to saturation ties (\cref{prop:emax-ranking}), 
proving head specialization emerges naturally without explicit diversity regularizers. 
  Finally, across three Amazon Reviews 2023 benchmarks, COMPOTE consistently outperforms ID-based, generative,
   and multi-interest sequential baselines using item IDs alone, with capacity-matched ablations attributing these gains directly to the annealed MCL objective rather than model scaling, ensembling, or expert routing (\cref{sec:compote-to-ranking}).

\section{Problem Statement and Architecture}
\label{sec:problem-statement-learning-problem}

We consider a supervised learning setting where a query $\rvx \in \cX$ has an associated
relevant item $\rvy$ in a catalog $\cY$ of $N$ items, and the goal is to learn a model
$g: \cX \to \pS_N$ mapping a query to a ranking of the items, i.e., a permutation of
$\{1, \ldots, N\}$.
Data pairs $(\mathbf{x}, \mathbf{y}) \sim \lP_{\mathbf{x}, \mathbf{y}}$ are drawn i.i.d. from an unknown underlying probability distribution over $\mathcal{X} \times \mathcal{Y}$.
In recommendation, queries are user histories and items are the
entities being recommended (products, movies, etc.).
We define $\Delta^{N\mhyphen1} = \{ \mathbf{p} \in [0, 1]^N : \|\rvp\|_1 = 1 \}$ as the $(N\mhyphen1)$-dimensional probability simplex over $\cY$.

\paragraph{Datasets.}
We instantiate this setting on three categories from the Amazon~2023
review corpus~\citep{hou2024amazon2023}: \textsc{Musical Instruments},
\textsc{Industrial and Scientific}, and \textsc{Video Games}.
A query $\rvx$ is a user's chronologically ordered interaction history,
and the target $\rvy$ is the next reviewed item.
Each category contains approximately $25{,}000$ items and between
$50{,}000$ and $95{,}000$ users (\cref{tab:dataset-stats}).
Because each user contributes few observations relative to the catalog
size, several next items may be plausible given the same history.

\paragraph{Next-item Prediction with Multiple-Hypothesis.}
As motivated in the introduction, the model aims at performing next-item prediction via \emph{multiple hypotheses}. It can be formalized as a function $f: \cX \to (\Delta^{N\mhyphen1})^M$ that returns, for every hypothesis $m \leq M$, a probability distribution $f_m(\rvx) \in \Delta^{N\mhyphen1}$ over the catalog, where $\rvp_{m,i} \triangleq f_m(\rvx)_i$ is
the probability that hypothesis $m$ assigns to item $i$ being relevant to query $\rvx$. The exact architecture of $f$ used in COMPOTE is detailed in \cref{fig:mcl-transformer}.
At inference, the $M$ per-encoder distributions $\rvp = f(\rvx)$ are combined into a single score vector over the catalog, which is then sorted to output the ranking
\begin{align}
    \label{eq:pooled-prediction}
    g = \argsort \circ \omega \circ f
    \quad\text{where}\quad
    \omega(\rvp)= \Big(\max_{m\leq M} \rvp_{m,i}\Big)_{i=1}^N ~~\text{(max-pooling)}.
\end{align}
\begin{figure}[!htbp]
    \centering
    \resizebox{\textwidth}{!}{%
    \begin{tikzpicture}[
        x=1cm, y=1cm, font=\small, >=latex,
        blk/.style={draw=black!55, line width=0.5pt, rounded corners=2pt,
                    align=center, inner sep=2.5pt},
        tok/.style={draw=black!45, line width=0.4pt, minimum width=0.95cm,
                    minimum height=0.42cm, inner sep=0pt, fill=black!5,
                    font=\footnotesize},
        tokhl/.style={tok, fill=NavyBlue!75, text=white},
        emb/.style={blk, fill=Goldenrod!25, minimum width=1.95cm,
                    minimum height=0.80cm},
        enc/.style={blk, fill=NavyBlue!12, minimum width=2.30cm,
                    minimum height=2.00cm},
        zst/.style={blk, fill=NavyBlue!12, minimum width=1.55cm,
                    minimum height=0.95cm},
        head/.style={blk, fill=OliveGreen!18, minimum width=1.80cm,
                     minimum height=0.60cm},
        conf/.style={blk, fill=Purple!14, minimum width=1.80cm,
                     minimum height=0.60cm},
        score/.style={blk, fill=Goldenrod!25, minimum width=2.85cm,
                      minimum height=1.25cm},
        gout/.style={blk, fill=Purple!8, minimum width=2.85cm,
                     minimum height=0.80cm},
        wire/.style={line width=0.5pt, draw=black!70},
        ar/.style={wire, ->},
        note/.style={align=center, font=\scriptsize},
        dots/.style={fill=black!55},
    ]

    \node[note] at (0.55, 1.72) {history $\rvx$\\[-1pt]$T$ positions};
    \node[tok]   (t1) at (0.55,  1.05) {$i_1$};
    \node[tok]   (t2) at (0.55,  0.63) {$i_2$};
    \node[tok]   (td) at (0.55,  0.21) {};
    \node[tok]   (t4) at (0.55, -0.21) {$i_{n-1}$};
    \node[tokhl] (t5) at (0.55, -0.63) {$i_{n}$};
    \fill[dots] (0.55, 0.32) circle (0.023)
                (0.55, 0.21) circle (0.023)
                (0.55, 0.10) circle (0.023);
    \node[note, NavyBlue!70!black] at (0.55, -1.22)
        {most recent\\[-1pt]item};
    \node[fit=(t1)(t2)(td)(t4)(t5), inner sep=2pt] (seq) {};

    \node[emb] (embI) at (2.72,  0.52)
        {item emb.\ $\rmE$\\[-2pt]{\scriptsize $N \times d$}};
    \node[emb] (embP) at (2.72, -0.52)
        {pos.\ emb.\ $\rmP$\\[-2pt]{\scriptsize $T \times d$}};
    \draw[ar] (seq.east |- embI.west) -- (embI.west);

    \draw[wire] (embI.east) -- (3.92, 0.52) -- (3.92, -0.52) -- (embP.east);
    \node[circle, draw=black!55, line width=0.5pt, inner sep=1pt,
          font=\footnotesize] (plus) at (4.30, 0) {$+$};
    \draw[ar] (3.92, 0) -- (plus.west);

    \node[enc] (enc) at (6.05, 0)
        {shared \\transformer\\encoder\\[-1pt]{\scriptsize $L$ blocks}};
    \draw[ar] (plus.east) -- (enc.west);

    \node[zst] (z) at (8.50, 0)
        {$\rvz \in \lR^{d}$\\[-1pt]{\scriptsize last position}};
    \draw[ar] (enc.east) -- (z.west);

    \draw[wire] (z.east) -- (9.55, 0);

    \node[head] (h1) at (10.95,  2.40) {$h_1$ (MLP)};
    \node[head] (h2) at (10.95,  1.70) {$h_2$ (MLP)};
    \node[head] (hM) at (10.95,  0.70) {$h_M$ (MLP)};
    \fill[dots] (10.95, 1.30) circle (0.023)
                (10.95, 1.20) circle (0.023)
                (10.95, 1.10) circle (0.023);
    \node[conf] (gam) at (10.95, -1.60) {$\hat{\gamma}$ (MLP)};

    \draw[ar] (9.55, 0) |- (h1.west);
    \draw[ar] (9.55, 0) |- (h2.west);
    \draw[ar] (9.55, 0) |- (hM.west);
    \draw[ar] (9.55, 0) |- (gam.west);

    \node[draw=OliveGreen!70!black, dashed, dash pattern=on 2pt off 1.6pt,
          line width=0.5pt, rounded corners=3pt, inner sep=4pt,
          fit=(h1)(h2)(hM)] (hgrp) {};
    \node[font=\scriptsize, OliveGreen!60!black] at (10.95, 3.12)
        {$M$ hypothesis heads};

    \node[score, minimum width=2.05cm, minimum height=0.58cm] (fj1) at (14.30, 2.40)
        {$f_{1,\cdot}(\rvx)$};
    \node[score, minimum width=2.05cm, minimum height=0.58cm] (fj2) at (14.30, 1.70)
        {$f_{2,\cdot}(\rvx)$};
    \node[score, minimum width=2.05cm, minimum height=0.58cm] (fjM) at (14.30, 0.70)
        {$f_{M,\cdot}(\rvx)$};
    \fill[dots] (14.30, 1.30) circle (0.023)
                (14.30, 1.20) circle (0.023)
                (14.30, 1.10) circle (0.023);

    \draw[ar] (h1.east) -- (fj1.west);
    \draw[ar] (h2.east) -- (fj2.west);
    \draw[ar] (hM.east) -- (fjM.west);

    \node[draw=Goldenrod!70!black, dashed, dash pattern=on 2pt off 1.6pt,
          line width=0.5pt, rounded corners=3pt, inner sep=4pt,
          fit=(fj1)(fj2)(fjM)] (fgrp) {};
    \node[font=\scriptsize, Goldenrod!60!black] at (14.30, 3.12)
        {catalogue distributions};
    \node[note] at (14.30, -0.05)
        {$f_{j,i}(\rvx)=\softmax_i\!\big(\cos(\rvh_j,\rve_i)/\beta\big)$,\;
         $f_{j,\cdot}(\rvx)\in\Delta^{N\mhyphen1}$};

    \node[gout] (gv) at (14.30, -1.60) {$\hat{\gamma}(\rvx) \in [0,1]^{M}$};
    \draw[ar] (gam.east) -- (gv.west);

    \end{tikzpicture}%
    }
    \caption{\textbf{Multiple-choice transformer.}
    An item-id history $\rvx$ is embedded, summed with a learned positional
    embedding, and encoded by a stack of $L$ pre-norm causal transformer blocks
    shared by all heads; the representation $\rvz$ at the last (most recent)
    position is read off.
    Two modules consume $\rvz$: the $M$ hypothesis heads $h_1, \dots, h_M$,
    each producing $\rvh_j = h_j(\rvz) \in \lR^{d}$, and the confidence score
    head $\hat{\gamma}$, whose $j$-th output is the predicted probability that
    head $j$ is the best expert for $\rvx$.
    Each hypothesis is turned into a distribution over the catalogue by
    $f_{j,i}(\rvx) = \softmax_i\!\big(\cos(\rvh_j, \rve_i) / \beta\big)$,
    yielding one score vector $f_{j,\cdot}(\rvx) \in \Delta^{N\mhyphen1}$ per head.
    Everything left of $\rvz$ is a single shared trunk; right of it the $M$
    branches are independent, which is what makes head specialisation possible.}
    \label{fig:mcl-transformer}
\end{figure}

\section{Learning Problem}
\label{sec:learning-problem}

Ideally, to learn $f$ avoiding hypothesis collapse, we would like to use the \emph{winner-takes-all} loss, defined for a given prediction $\rvp = f(\rvx) \in \left(\Delta^{N\mhyphen1}\right)^M$ as
\begin{align}
    \ell^\textsc{wta}(\rvy,\rvp)
    \triangleq
    \min_{m\leq M} \!-\log(\rvp_{m, \rvy})
    ~~\text{with associated risk}~~
    \cL^\textsc{wta}(f, \lP_{\rvx,\rvy})
    = \lE_{\rvx,\rvy}\!\left[\ell^\textsc{wta}(\rvy, f(\rvx))\right].
\end{align}
Optimizing directly $\ell^{\textsc{wta}}$ is usually challenging because of the discontinuity introduced, which leads many people to resort to annealing methods.
Following this path, we use the following loss function defined for temperature $\tau>0$ (to be annealed towards 0) as
\begin{align}
    \label{eq:loss-free-energy}
    \ell_\tau(\rvy, \rvp) = -\tau\log\Big(\sum_{m\leq M} \rvp_{m,\rvy}^\frac{1}{\tau}\Big)
    \quad\text{with associated risk}\quad
    \cL_{\tau}(f, \lP_{\rvx,\rvy})
    = \lE_{\rvx,\rvy}\!\left[\ell_\tau(\rvy, f(\rvx))\right]\,.
\end{align}
As the following result illustrates, annealing on $\ell_\tau$ leads to optimizing $\ell^\textsc{wta}$.
\begin{restatable}[]{proposition}{propTauToWta}
    \label{prop:tau-to-wta}
    For any $(\rvy,\rvp) \in \cY\times \left(\Delta^{N\mhyphen1}\right)^M$,
    $\lim_{\tau\to 0^+} \ell_\tau(\rvy,\rvp) = \ell^\textsc{wta}(\rvy,\rvp)$.
\end{restatable}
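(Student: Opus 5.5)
This is the standard log-sum-exp $\to$ max limit, and we prove it with a sandwich bound. Fix $\rvy$ and $\rvp$, and write $a_m \triangleq \rvp_{m,\rvy}\in[0,1]$ and $a^\star \triangleq \max_{m\leq M} a_m$. Since $-\log$ is decreasing, $\ell^\textsc{wta}(\rvy,\rvp) = -\log a^\star$. The degenerate case $a^\star = 0$ is handled separately. Then every term $a_m^{1/\tau}$ vanishes, so $\ell_\tau(\rvy,\rvp) = -\tau\log 0 = +\infty$ for every $\tau>0$, which agrees with $\ell^\textsc{wta} = -\log 0 = +\infty$. The identity therefore holds in the extended reals, and we assume $a^\star>0$ from now on.

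For $a^\star>0$ we bound the sum inside the logarithm on both sides. Keeping only the maximizing term and bounding each of the $M$ terms by the largest gives
\begin{align*}
    (a^\star)^{1/\tau} \;\leq\; \sum_{m\leq M} a_m^{1/\tau} \;\leq\; M\,(a^\star)^{1/\tau}.
\end{align*}
We then apply the decreasing map $s\mapsto -\tau\log s$, valid since $\tau>0$, and obtain
\begin{align*}
    -\log a^\star - \tau\log M \;\leq\; \ell_\tau(\rvy,\rvp) \;\leq\; -\log a^\star .
\end{align*}
Both bounds are finite because $a^\star>0$. Letting $\tau\to0^+$, the term $\tau\log M$ vanishes, and the squeeze gives $\lim_{\tau\to0^+}\ell_\tau(\rvy,\rvp) = -\log a^\star = \ell^\textsc{wta}(\rvy,\rvp)$.

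The main point needing care is the zero-probability case. A softmax head never produces zeros, but the statement quantifies over the whole simplex, so the convention $\log 0 = -\infty$ has to be stated explicitly. Zero coordinates $a_m=0$ with $m$ not attaining the maximum are harmless, since they only drop out of the sum. As a by-product, the sandwich gives a uniform rate: $0\leq \ell^\textsc{wta}-\ell_\tau\leq \tau\log M$. This rate can be reused later to transfer risk bounds from $\cL_\tau$ to $\cL^\textsc{wta}$.
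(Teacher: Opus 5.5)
Your proof is correct, but it takes a different route from the paper. The paper derives the limit from the variational representation of \cref{prop:log-sum-exp-conjugate}, $\ell_\tau(\rvy,\rvp)=\min_{\gamma\in\Delta^{M-1}}\sum_{m\leq M} -\gamma_m\log\rvp_{m,\rvy}-\tau H(\gamma)$. It then argues that the optimal weights $\gamma=\mathrm{softmax}(\tau^{-1}\log\rvp_{\cdot,\rvy})$ concentrate on the winning head as $\tau\to0^+$, so the entropy term vanishes and the linear term tends to $\min_m -\log\rvp_{m,\rvy}$. The paper's one-line justification, which equates the limit of a softmax with a max, is loosely phrased.

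You instead bound the power sum directly between $(a^\star)^{1/\tau}$ and $M(a^\star)^{1/\tau}$. This needs neither the conjugate duality nor any analysis of how the softmax weights behave, including under ties. It also gives the explicit uniform rate $0\le\ell^\textsc{wta}-\ell_\tau\le\tau\log M$, and you handle the all-zero case, which the paper leaves implicit.

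The paper's route instead ties the limit to the entropy-regularized soft-assignment view, and hence to the aMCL weights of \cref{prop:loss-tau-to-awta}. Done carefully, using $0\le H(\gamma)\le\log M$ inside the minimum, it recovers exactly your sandwich.
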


Proof can be found in \cref{ssec:prop:proof:tau-to-wta}. This allows to extend by continuity the definition of $\ell_\tau$ to $\tau=0$ and consider $\tau \geq 0$ in the remainder for statements and proofs.

\paragraph{Parallel with Annealed Multiple-Choice Learning (MCL).}
 Methods employing a \emph{winner-takes-all} strategy exist in other domains,
 such as audio source separation, under the framework of \emph{multiple-choice learning} (MCL).
 In fact, our loss $\ell_\tau$ recovers a formulation from this domain, namely
annealed MCL, yet existing literature does not address how to recombine these hypotheses
for inference. Our primary contribution is demonstrating, both theoretically and empirically,
that $\omega$ -- \cref{eq:pooled-prediction} -- is the principled choice for hypothesis aggregation in recommender systems.
Formally, the annealed MCL \citep{perera2024annealedMCL} objective function is defined as
\begin{align}
    \label{eq:awta}
    \ell^{\rm aMCL}_\tau(\rvy, \rvp) = -\sum_{m\leq M} \gamma_m(\rvy, \rvp) \log(\rvp_{m,\rvy})
    \quad \text{with} \quad
    \gamma(\rvy, \rvp) = \softmax\Big(\frac{1}{\tau}\log(\rvp_{\cdot,\rvy})\Big)\,.
\end{align}
and the following propostion establishes the link between the two losses.\\\\

\begin{restatable}[Recovering aMCL]{proposition}{propLossTauToAWta}
    \label{prop:loss-tau-to-awta}
    For any $(\rvy,\rvp) \in \cY\times \left(\Delta^{N\mhyphen1}\right)^M$ and for any $\tau \geq 0$,
    \begin{align}
        \label{eq:loss-tau-to-awta}
        \ell_\tau(\rvy,\rvp) = \ell^{\rm aMCL}_\tau(\rvy,\rvp) - \tau H(\gamma(\rvy,\rvp))
    \end{align}
    where $H(\gamma) = -\sum_{m\leq M} \gamma_m \log \gamma_m$ is the Shannon entropy of the softmax weights $\gamma(\rvy,\rvp)$.
\end{restatable}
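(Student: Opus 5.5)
The plan is to treat this as the standard Gibbs variational (free-energy) identity for log-sum-exp, applied to the vector $a_m \triangleq \frac{1}{\tau}\log \rvp_{m,\rvy}$. Write $Z \triangleq \sum_{m\leq M} \rvp_{m,\rvy}^{1/\tau} = \sum_{m} e^{a_m}$, so that by \cref{eq:loss-free-energy} we have $\ell_\tau(\rvy,\rvp) = -\tau \log Z$, and by \cref{eq:awta} $\gamma_m = e^{a_m}/Z$. I will first assume $\tau>0$ and every $\rvp_{m,\rvy}>0$; these boundary issues are deferred to the last step.

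\textbf{Step 1 (log of the weights).} From the softmax form, $\log\gamma_m = a_m - \log Z = \frac{1}{\tau}\log\rvp_{m,\rvy} - \log Z$. Equivalently, $\log \rvp_{m,\rvy} = \tau\log\gamma_m + \tau\log Z$.

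\textbf{Step 2 (plug into aMCL).} Substitute this into $\ell^{\rm aMCL}_\tau = -\sum_m \gamma_m\log\rvp_{m,\rvy}$ and use $\sum_m\gamma_m=1$. This gives
\begin{align*}
\ell^{\rm aMCL}_\tau(\rvy,\rvp) = -\tau\sum_{m}\gamma_m\log\gamma_m - \tau\log Z = \tau H(\gamma(\rvy,\rvp)) + \ell_\tau(\rvy,\rvp),
\end{align*}
and rearranging yields \cref{eq:loss-tau-to-awta}.

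\textbf{Step 3 (boundary cases).} This is the only delicate part.

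If some $\rvp_{m,\rvy}=0$, then $\gamma_m=0$. Under the conventions $0\log0=0$ and $0\cdot(-\log 0)=0$, such heads drop out of both $H$ and $\ell^{\rm aMCL}_\tau$, so the computation above goes through on the support. If all $\rvp_{m,\rvy}=0$, both sides equal $+\infty$.

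For $\tau=0$, the left side is defined by continuity through \cref{prop:tau-to-wta}. I would interpret $\gamma$ at $\tau=0$ as the $\tau\to0^+$ limit, namely the uniform distribution on the argmax set of $\rvp_{\cdot,\rvy}$. Then $\ell^{\rm aMCL}_0 = -\log\max_m\rvp_{m,\rvy} = \ell^\textsc{wta}$. Since $H\le\log M$ is bounded, $\tau H\to0$, and the identity holds in the limit. Equivalently, both sides are continuous in $\tau$ at $0$.

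The main obstacle is stating these conventions cleanly for $\tau=0$ and for zero probabilities. The algebra itself is a two-line computation.
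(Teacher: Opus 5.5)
Your proof is correct, but it takes a more direct route than the paper.

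The paper gets the identity as an immediate corollary of \cref{prop:log-sum-exp-conjugate}. That result is the Gibbs variational principle
$\ell_\tau(\rvy,\rvp)=\min_{\gamma\in\Delta^{M-1}}\big[-\sum_{m}\gamma_m\log\rvp_{m,\rvy}-\tau H(\gamma)\big]$,
whose minimizer is $\gamma(\rvy,\rvp)$. The paper justifies it only by the remark that log-sum-exp is the convex conjugate of negative entropy. Evaluating the objective at the minimizer then gives \cref{eq:loss-tau-to-awta}.

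You instead substitute $\log\rvp_{m,\rvy}=\tau\log\gamma_m+\tau\log Z$ into $\ell^{\rm aMCL}_\tau$ and use $\sum_m\gamma_m=1$. This is exactly the ``value at the optimum'' half of that duality. It does not need the inequality half, namely that no other $\gamma$ achieves a lower value.

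Your argument is therefore self-contained and more elementary. It also treats cases the paper glosses over:
\begin{itemize}
\item heads with $\rvp_{m,\rvy}=0$;
\item the case $\tau=0$, where the softmax defining $\gamma$ is undefined and must be read as a limit. Uniform on the argmax set works, and so does any $\gamma$ supported on that set, since $\tau H(\gamma)$ vanishes.
\end{itemize}

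The paper's route buys the extra fact that $\gamma(\rvy,\rvp)$ minimizes the free energy. This is what lets \cref{prop:tau-to-wta} follow from the same lemma. It also explains conceptually, by an envelope argument, why the stop-gradient gradient of \cref{cor:awta-gradient} equals the true gradient, although the paper checks that corollary by direct differentiation.

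One small point: when all $\rvp_{m,\rvy}=0$, $\gamma$ is $0/0$. Your claim that both sides equal $+\infty$ therefore needs $\gamma$ to be set, by convention, to some point of the simplex. Any choice works.
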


Proof can be found in \cref{ssec:prop:proof:loss-tau-to-awta}.
An important detail strengthens the link: \emph{In \cite{perera2024annealedMCL}, during the optimization, the gradient is only backpropagated through $\log \rvp_{m,\rvy}$ and not through $\gamma_m(\rvy, \rvp)$ -- i.e. there is a {\tt stop-gradient} operator on the $\gamma_j(\rvy, \rvp)$ term.} The following result shows the two learning paradigm coincide exactly in practice. Proof can be found in \cref{ssec:prop:proof:awta-gradient}.

\begin{restatable}[Recovering the gradient of aMCL]{corollary}{corAwtaGradient}
    \label{cor:awta-gradient}
    For any $(\rvy,\rvp^\theta) \in \cY\times \left(\Delta^{N\mhyphen1}\right)^M$ ($\theta$ is some parameter), for any $\tau \geq 0$, and $\alpha\in \Delta^{M\mhyphen1}$,
    \begin{align}
        \label{eq:awta-stop-gradient}
        \nabla_\theta \ell_\tau(\rvy,\rvp^\theta) = \left.\nabla_\theta \psi_\tau\left(\rvy, \rvp^\theta, \alpha\right)\right|_{\alpha = \gamma_m(\rvy, \rvp^\theta)}
        ~~~\text{with}~~~
        \psi_\tau(\rvy,\rvp^\theta) = -\!\!\sum_{m\leq M} \!\!\alpha_m \log(\rvp^\theta_{m,\rvy}).
    \end{align}
\end{restatable}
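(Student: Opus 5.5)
The plan is to differentiate $\ell_\tau$ directly by the chain rule and recognise the softmax weights $\gamma$ in the result; \cref{prop:loss-tau-to-awta} is not needed. Fix $\tau>0$ and write $a_m(\theta) = \log \rvp^\theta_{m,\rvy}$. Then
\begin{align}
    \ell_\tau(\rvy,\rvp^\theta) = -\tau \log \sum_{m\leq M} \exp\big(a_m(\theta)/\tau\big),
\end{align}
which is $-\tau$ times a log-sum-exp of the vector $a(\theta)/\tau$.

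The derivative of the log-sum-exp with respect to its argument is the softmax. Hence
\begin{align}
    \nabla_\theta \ell_\tau(\rvy,\rvp^\theta) = -\tau \sum_{m\leq M} \softmax_m\big(a(\theta)/\tau\big)\,\frac{1}{\tau}\nabla_\theta a_m(\theta) = -\sum_{m\leq M} \gamma_m(\rvy,\rvp^\theta)\,\nabla_\theta \log \rvp^\theta_{m,\rvy}.
\end{align}
By definition, $\gamma(\rvy,\rvp) = \softmax\big(\tfrac{1}{\tau}\log \rvp_{\cdot,\rvy}\big)$, so the weights appearing here are exactly $\gamma$. Now take $\psi_\tau(\rvy,\rvp^\theta,\alpha) = -\sum_m \alpha_m \log \rvp^\theta_{m,\rvy}$ with $\alpha$ held fixed, so that $\alpha$ is not differentiated (this is the stop-gradient). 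Its gradient is $-\sum_m \alpha_m \nabla_\theta \log \rvp^\theta_{m,\rvy}$, and setting $\alpha = \gamma(\rvy,\rvp^\theta)$ gives the identity for every $\tau>0$.

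Smoothness needs checking: the computation assumes $\rvp^\theta_{m,\rvy}>0$ and that $\rvp^\theta$ is differentiable in $\theta$. Both hold for the softmax-parameterised heads of \cref{fig:mcl-transformer}, and I would state this as a standing assumption.

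The main obstacle is the case $\tau = 0$, where $\ell_0 = \ell^\textsc{wta}$ is defined only by continuity through \cref{prop:tau-to-wta}. Here $\gamma$ is interpreted as the limiting weights: the indicator of the argmax of $\rvp_{\cdot,\rvy}$, or uniform over the maximisers when several heads tie.
\begin{itemize}
    \item \emph{Unique winner $m^\star$.} By continuity, $m^\star$ stays the unique winner in a neighbourhood of $\theta$. There $\ell^\textsc{wta} = -\log \rvp^\theta_{m^\star,\rvy}$, so its gradient is $-\nabla_\theta\log \rvp^\theta_{m^\star,\rvy}$, which matches $\psi_0$ with $\alpha = e_{m^\star}$.
    \item \emph{Ties.} The minimum is not differentiable there in general. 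I would restrict the $\tau=0$ claim to the set of $\theta$ without ties, or read the gradient as the element of the subdifferential selected by the limiting $\gamma$.
\end{itemize}
I expect the write-up of this corner case, rather than the $\tau>0$ computation, to require the most care.
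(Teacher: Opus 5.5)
Your proposal is correct and is essentially the paper's argument: the paper also differentiates $\ell_\tau$ directly and matches the result term by term with $\nabla_\theta\psi_\tau$ evaluated at $\alpha=\gamma(\rvy,\rvp^\theta)$. The only difference is that it writes the weights as $(\rvp^\theta_{m,\rvy})^{1/\tau}/\sum_{m'}(\rvp^\theta_{m',\rvy})^{1/\tau}$ rather than invoking the softmax gradient of log-sum-exp. Your separate handling of $\tau=0$ (unique winner versus ties) and your assumptions that $\rvp^\theta_{m,\rvy}$ is positive and differentiable go beyond the paper, whose computation tacitly assumes $\tau>0$ and strictly positive probabilities even though the statement claims $\tau\geq 0$.
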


\paragraph{Learning Guarantees.} We have discussed the loss function we use to train the model 
and have established a connection with the annealed MCL loss. We now turn to the theoretical analysis 
of the learning guarantees associated with this loss function. 
While in practice, we learn $f$ by minimizing the empirical risk
$\widehat{\mathcal{L}}_{\tau}(f) = \cL_{\tau}(f, \hat{\lP}_S)$ over a training dataset.
during training, $\tau$ goes to 0. Thus the learning guarantees need to be provided for the limiting loss $\ell^\textsc{wta}$.
Over  a training dataset
$S = \{(\mathbf{x}_i, \mathbf{y}_i)\}_{i=1}^n \sim \lP_{\mathbf{x}, \mathbf{y}}^n$, the empirical risk is denoted $\widehat{\mathcal{L}}^{\textsc{wta}}_S(f) = \cL^\textsc{wta}(f, \hat{\lP}_S)$ where $\hat{\lP}_S$ is the empirical distribution of $S$.
In order to get insights on the impact of the number of hypotheses $M$ on the learning problem,
we are here interested in the generalization gap between the empirical WTA loss and
the expected WTA loss
$\cL^\textsc{wta}(f, {\lP}_S)$
and provide a bound on this gap  in terms of the Rademacher complexity of the hypothesis class $\mathcal{F}$ and
the number of hypotheses $M$. Proof is given in \cref{app:proof-wta-generalization-refined}.




\begin{restatable}[refined log-loss generalization bound]{theorem}{thmWtaGeneralizationRefined}
    \label{cor:wta-generalization-refined}
    Let $\mathcal{F}$ be a hypothesis class of probability-generating heads
    $f_{m}: \mathcal{X} \to \Delta^{N\mhyphen1}$. Assume there exists a constant $\epsilon > 0$
    such that $f_{m,i}(\mathbf{x}) \ge \epsilon$ for all $m \in [M]$, $\mathbf{x} \in \mathcal{X}$, and $i \in [N]$.
    Then, for any $\delta \in (0, 1)$, with probability at least $1 - \delta$ over the sample
    $S \sim P_{\mathbf{x}, \mathbf{y}}^n$, every ensemble $f = (f_{1}, \dots, f_{M}) \in \mathcal{F}^M$ satisfies:
    \begin{equation} \small
        \label{eq:wta-generalization-refined}
        \mathcal{L}^\textsc{wta}(f, {\lP}_S)  \le \widehat{\mathcal{L}}^\textsc{wta}_S(f) + \frac{2}{\epsilon} \max_{m \in [M]} \mathcal{R}_n(\mathcal{F}_{m,\mathbf{y}}) + 2 \log\left(\frac{1}{\epsilon}\right) \sqrt{\frac{2 \ln M}{n}} + \log\left(\frac{1}{\epsilon}\right) \sqrt{\frac{\ln(2/\delta)}{2n}}.
    \end{equation}
     where $\mathcal{R}_n(\mathcal{F}_{m,\mathbf{y}})$ is the expected Rademacher complexity of   $\mathcal{F}_{m,\mathbf{y}} = \{ \mathbf{x} \mapsto f_{m,\mathbf{y}}(\mathbf{x}) : f_{m} \in \mathcal{F} \}$.
\end{restatable}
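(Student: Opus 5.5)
The argument is a standard uniform-convergence bound: bounded differences give the concentration term, symmetrization gives a Rademacher term, and the Rademacher complexity of the class of WTA losses is then controlled through the minimum over heads. Concretely, consider the loss class
\[
\mathcal{G} = \Big\{ (\mathbf{x},\mathbf{y}) \mapsto \min_{m\le M} -\log f_{m,\mathbf{y}}(\mathbf{x}) \;:\; f\in\mathcal{F}^M\Big\}.
\]
Since $\epsilon \le f_{m,i}(\mathbf{x}) \le 1$, every $g\in\mathcal{G}$ takes values in $[0,\log(1/\epsilon)]$. Changing one sample therefore moves $\sup_{g}(\mathbb{E}g - \hat{\mathbb{E}}_S g)$ by at most $\log(1/\epsilon)/n$. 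McDiarmid's inequality gives, with probability at least $1-\delta$ (I will use $\delta/2$ if a two-sided version is needed, which explains the $\ln(2/\delta)$),
\[
\sup_g(\mathbb{E}g-\hat{\mathbb{E}}_S g)\le \mathbb{E}\sup_g(\cdots)+\log(1/\epsilon)\sqrt{\ln(2/\delta)/(2n)}.
\]
Symmetrization then bounds the expectation by $2\mathcal{R}_n(\mathcal{G})$.

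The main step is to bound $\mathcal{R}_n(\mathcal{G})$ by a per-head complexity plus a $\sqrt{\ln M}$ overhead, rather than the crude $\sum_m$ that the min-structural lemma gives. I will write the pointwise minimum as a minimum over a selection. For each sample $i$, the minimizing index is some $m_i$, so
\[
g(\mathbf{x}_i,\mathbf{y}_i)=\ell_{m_i}(\mathbf{x}_i,\mathbf{y}_i)
\quad\text{with}\quad
\ell_m = -\log f_{m,\mathbf{y}}.
\]
I will then decompose by adding and subtracting a fixed reference head's loss. Equivalently, I will bound
\[
\mathbb{E}_\sigma \sup_f \sum_i \sigma_i \min_m \ell_m
\;\le\; \mathbb{E}_\sigma\sup_f \max_m \sum_i\sigma_i \ell_m + (\text{term from head selection}).
\]
The first piece is a supremum over a union of $M$ classes. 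By the standard union-of-classes bound (Massart-type),
\[
\mathcal{R}_n\Big(\bigcup_m \mathcal{L}_m\Big)\le \max_m \mathcal{R}_n(\mathcal{L}_m) + B\sqrt{2\ln M/n},
\quad B=\log(1/\epsilon),
\]
where the $\sqrt{\ln M}$ enters through Massart's finite-class lemma applied to the $M$ centred suprema, each sub-Gaussian via bounded differences. This produces the term $2\log(1/\epsilon)\sqrt{2\ln M/n}$ after the factor $2$ from symmetrization.

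The per-head complexity follows from Talagrand's contraction lemma. The map $u\mapsto -\log u$ is $1/\epsilon$-Lipschitz on $[\epsilon,1]$, so $\mathcal{R}_n(\mathcal{L}_m)\le \tfrac1\epsilon \mathcal{R}_n(\mathcal{F}_{m,\mathbf{y}})$. With the factor $2$ from symmetrization, this gives $\tfrac{2}{\epsilon}\max_m\mathcal{R}_n(\mathcal{F}_{m,\mathbf{y}})$. The index $\mathbf{y}$ being data-dependent is harmless, since $\mathcal{F}_{m,\mathbf{y}}$ is defined on pairs.

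The obstacle I expect is justifying the reduction from the pointwise minimum, where the selected head varies per sample, to a maximum over a single head applied to all samples. In general $\mathcal{R}_n(\min_m)$ is not bounded by $\mathcal{R}_n(\bigcup_m)$, because per-sample selection can realise $M^n$ patterns. My first attempt is a chaining/vector-contraction argument. The min over $M$ coordinates is $1$-Lipschitz in $\ell_\infty$, and Maurer's vector contraction in its $\ell_\infty$ form (Foster–Rakhlin) gives a bound $\max_m$-type complexity times at most logarithmic factors in $M$ and $n$. If the precise constants in the statement do not fall out, I will fall back on tracking the $\sqrt{\ln M}$ term via Massart's lemma on the selection index and accept a constant-level adjustment. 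I would also check carefully whether the intended class $\mathcal{F}$ is symmetric across heads, which would make the maximum over $m$ collapse into a single complexity.
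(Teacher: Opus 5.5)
\textbf{There is a genuine gap, and it is exactly the one you flag.} Your displayed inequality pushes all the difficulty into the unspecified ``term from head selection''. The union-of-classes bound $\mathcal{R}_n(\bigcup_m\mathcal{L}_m)\le\max_m\mathcal{R}_n(\mathcal{L}_m)+B\sqrt{2\ln M/n}$ only controls processes in which one head is applied to all $n$ points. The WTA class lets the minimizing head change from point to point, so the $\sqrt{2\ln M/n}$ you derive is attached to the wrong class. Neither fallback recovers the statement:
\begin{itemize}
\item Massart over the at most $M^n$ selection patterns, each of norm at most $\sqrt{n}B$, gives $B\sqrt{2\ln M}$, which does not decay with $n$.
\item $\ell_\infty$ vector contraction gives a multiplicative factor of order $\sqrt{M}$ (up to logarithms in $n$) on a worst-case per-head complexity, not $\max_m\mathcal{R}_n+O(\sqrt{\ln M/n})$.
\end{itemize}
A ``constant-level adjustment'' cannot bridge this, since the discrepancy is in the dependence on $M$ and $n$.

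In fact no argument can close this step: the bound is false once $M$ is large compared with $\sqrt{n\ln n}$. Take $N=2$, $\epsilon=1/4$, $\mathbf{x}\sim U[0,1]$, $\mathbf{y}\equiv 1$, and $\mathcal{F}=\{f^{(a)}:a\in[0,1]\}$ with $f^{(a)}(x)=(3/4,1/4)$ if $x=a$ and $(1/4,3/4)$ otherwise.
\begin{itemize}
\item $\mathcal{R}_n(\mathcal{F}_{m,\mathbf{y}})\le 1/(2n)$.
\item Every ensemble has population WTA risk $\log 4$, yet choosing $a_m=\mathbf{x}_m$ for $m\le M\le n$ gives empirical risk $\log 4-\frac{M}{n}\log 3$.
\item With $M=n$ the gap is $\log 3$, whereas the right-hand side exceeds the empirical risk by only $\frac{4}{n}+2\log 4\sqrt{2\ln n/n}+\log 4\sqrt{\ln(2/\delta)/(2n)}\to 0$.
\end{itemize}
Equivalently, the WTA loss class has Rademacher complexity $\frac{\log 3}{n}\mathbb{E}\min(M,\#\{i:\sigma_i=-1\})$, which is linear in $M$.

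The paper's own proof uses the same reduction and makes precisely the move you were wary of. After writing $\bar g_{m_i^*}(\mathbf{z}_i)=\bar g_1(\mathbf{z}_i)+(\bar g_{m_i^*}(\mathbf{z}_i)-\bar g_1(\mathbf{z}_i))$, it applies Massart's lemma to ``$M$ candidate vectors'' $\mathbf{v}_m$. But the residual takes its $i$-th coordinate from $\mathbf{v}_{m_i^*}$, so it is not one of the $\mathbf{v}_m$. Moreover, the $\mathbf{v}_m$ depend on the supremizing functions and hence on $\sigma$, so Massart's lemma does not apply. The paper therefore does not supply the missing idea. What your concentration and symmetrization steps do support rigorously is the sum version, Theorem~\ref{thm:wta-generalization}, with $\frac{2}{\epsilon}\sum_m\mathcal{R}_n(\mathcal{F}_{m,\mathbf{y}})$, or a $\sqrt{M}$-type bound via vector contraction. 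A $\sqrt{\ln M}$ dependence would need additional assumptions.
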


\textbf{Insights from the bounds:}
\begin{itemize}
    \item The generalization gap is controlled by the Rademacher complexities of the individual heads,
    scaled by $1/\epsilon$. As the minimum predicted probability $\epsilon$ decreases,
     the bounds become looser, reflecting increased uncertainty in predictions.
      Ensuring a lower bound on predicted probabilities, via label smoothing, temperature scaling, or KL regularisation, therefore yields tighter guarantees.
    \item \cref{cor:wta-generalization-refined} shows that the number of hypotheses $M$ impacts the bound only through a $\mathcal{O}(\sqrt{\ln M/n})$ term, on top of the complexity of the best head.
     Adding more hypotheses therefore does not linearly increase the risk of overfitting,
     provided that the individual head complexities are controlled.
     Hence, one can benefit from the diversity of multiple heads without a proportional increase in generalization error.
\end{itemize}

We provide an empirical validation of this theoretical insight in \cref{appendix-section-impact-of-M} in which
we also discuss the practical implications of the saturation phenomenon described in \cref{prop:emax-ranking}
in \cref{sec:compote-to-ranking}.

\paragraph{Learning Stabilization.}
We observe that training $f$ can be stabilized through the joint learning of the confidence score head
via a binary cross-entropy loss over $\hat{\gamma}$ (see \cref{fig:mcl-transformer}).
This finding aligns with the fact that the $\min$ operator is non-smooth and can be challenging to optimize.
While beneficial, this effect is not the primary driver of the overall performance gains (see ablation study in \cref{app:alpha-ablation}).
Consequently, the formalization and technical details of this stabilization term are deferred to the appendix (see \cref{appendix-section-describing-regularizer}).

\section{COMPOTE leads to Optimal Ranking}
\label{sec:compote-to-ranking}

This section explains why the max-pooling aggregation $\omega$ defined in \cref{eq:pooled-prediction} is the natural choice when learning $f$ with $\ell_\tau$. In fact, the max-pooling aggregation $\omega$ can be seen as the limit for a temperature $\tau \to 0$ of some aggregation $\omega_\tau$ (defined below), which cross-entropy is exactly $\ell_\tau$.

\begin{restatable}[$\ell_\tau$ via $\omega^\tau$]{proposition}{propLossViaPooling}
    \label{prop:loss-via-pooling}
    For any $(\rvy,\rvp) \in \cY\times \left(\Delta^{N\mhyphen1}\right)^M$ and for any $\tau \geq 0$,
    \begin{align}
        \label{eq:loss-via-pooling}
        \ell_\tau(\rvy, \rvp) = -\log\omega^\tau_\rvy\left(\rvp\right)
        \quad\text{where}\quad
        \omega^\tau(\rvp) = \Bigg(\Big(\sum_{m\leq M}\rvp_{m,i}^{1/\tau}\Big)^\tau\Bigg)_{i=1}^N
        \quad\text{and}\quad
        \omega^\tau \xrightarrow[\tau\to 0]{} \omega
    \end{align}
\end{restatable}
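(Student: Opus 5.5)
The identity is a direct algebraic rewriting, and the convergence is the same limit as in \cref{prop:tau-to-wta}, applied coordinatewise. I would handle $\tau>0$ first and then treat $\tau=0$ by the continuity extension adopted after \cref{prop:tau-to-wta}.

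\textbf{Identity for $\tau>0$.} Fix $i=\rvy$ and write $S_\rvy(\tau)=\sum_{m\leq M}\rvp_{m,\rvy}^{1/\tau}$. By the definition of $\omega^\tau$, $\log\omega^\tau_\rvy(\rvp)=\tau\log S_\rvy(\tau)$, so $-\log\omega^\tau_\rvy(\rvp)=-\tau\log S_\rvy(\tau)=\ell_\tau(\rvy,\rvp)$ by \cref{eq:loss-free-energy}. The only subtlety is when some $\rvp_{m,\rvy}=0$. Then $\rvp_{m,\rvy}^{1/\tau}=0$ contributes nothing to the sum, and if all entries vanish both sides equal $+\infty$ under the conventions $\log 0=-\infty$ and $0^\tau=0$. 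I would state these conventions explicitly.

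\textbf{Convergence $\omega^\tau\to\omega$.} Fix a coordinate $i$ and set $a=\max_m\rvp_{m,i}$. If $a=0$ then $\omega^\tau_i(\rvp)=0=\omega_i(\rvp)$ for every $\tau$. Otherwise, since $0\le \rvp_{m,i}\le a$ for all $m$ and at least one term equals $a^{1/\tau}$, we have the sandwich
\[
a^{1/\tau}\le \sum_{m\leq M}\rvp_{m,i}^{1/\tau}\le M a^{1/\tau}.
\]
Raising to the power $\tau>0$, which is monotone, gives $a\le\omega^\tau_i(\rvp)\le M^\tau a$. Since $M^\tau\to1$ as $\tau\to0^+$, we get $\omega^\tau_i(\rvp)\to a=\omega_i(\rvp)$. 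The catalogue has finitely many coordinates, so this coordinatewise convergence is convergence of the vector.

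\textbf{Case $\tau=0$.} Define $\omega^0:=\omega$. Taking the limit in the identity gives $\ell_0(\rvy,\rvp)=\lim_{\tau\to0^+}\ell_\tau(\rvy,\rvp)=-\log\max_m\rvp_{m,\rvy}$. This uses continuity of $\log$ on $(0,1]$, and also on $[0,1]$ into the extended reals to cover zero entries. The right-hand side equals $\min_m-\log\rvp_{m,\rvy}=\ell^\textsc{wta}(\rvy,\rvp)$, which is consistent with \cref{prop:tau-to-wta}.

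No step is a real obstacle. The only care needed is the bookkeeping for zero probabilities and for the $\tau=0$ endpoint, which the sandwich bound settles uniformly.
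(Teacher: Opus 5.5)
Your proof is correct. For $\tau>0$ it is the same one-line rewriting the paper uses.

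The difference is at $\tau=0$. The paper sets $\omega^0=\omega$, computes $-\log\omega_{\rvy}(\rvp)=\ell^\textsc{wta}(\rvy,\rvp)$, and cites \cref{prop:tau-to-wta} for $\lim_{\tau\to0^+}\ell_\tau=\ell^\textsc{wta}$. It never actually proves the claimed convergence $\omega^\tau\to\omega$.

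Your sandwich $\max_m\rvp_{m,i}\le\omega^\tau_i(\rvp)\le M^\tau\max_m\rvp_{m,i}$ fills that gap. Combined with continuity of $\log$, it gives the $\tau=0$ case without relying on \cref{prop:tau-to-wta}. It even re-proves that proposition with an explicit rate, $\ell^\textsc{wta}(\rvy,\rvp)-\tau\log M\le\ell_\tau(\rvy,\rvp)\le\ell^\textsc{wta}(\rvy,\rvp)$. This is sturdier than the paper's one-line softmax-to-max justification of that proposition.

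The paper's route is shorter only because it hands the limit to an earlier, loosely argued result. Yours is self-contained and also treats zero entries explicitly.
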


Proof can be found in \cref{ssec:proof:prop:loss-via-pooling}. Notice that $\omega^\tau(\rvp)$ does not necessarily lie in $\Delta^{N\mhyphen1}$, thus $\cL_\tau(f) = -\lE_{\rvx, \rvy}\!\left[\log\,\omega^\tau_\rvy(f(\rvx))\right]$ can be lower than the Bayes risk $\inf_{q : \cX \to \Delta^{N\mhyphen1}} -\lE_{\rvx, \rvy}\!\left[\log q_\rvy(\rvx)\right]$.
Because $\omega^\tau(\rvp)$ and $\omega(\rvp)$ do not necessarily belong to the simplex, they do not recover the Bayes predictor (they are not a probability).
However, for prediction $\rvp^\star$ that minimizes $\ell^\textsc{wta}$ in expectation, $\omega(\rvp^\star)$ \emph{ranks the items in the same order as the Bayes predictor}, as formalized by the following statement.

\begin{restatable}[$\omega$ induces Bayes ranking]{proposition}{propEmaxRanking}
    \label{prop:emax-ranking}
    Let $\rvq^\star \in \Delta^{N\mhyphen1}$, and denote the optimal prediction $\rvp^\star \in \argmin_{\rvp\in\left(\Delta^{N\mhyphen1}\right)^M} \lE_{\rvy\sim\rvq^\star}[\ell^\textsc{wta}(\rvy, \rvp)]$. Then
    \begin{align}
        \label{eq:emax-ranking}
        \forall\, a,b \in \cY,\qquad
        \rvq^\star_a > \rvq^\star_b \implies
        \omega_a(\rvp^\star) > \omega_b(\rvp^\star)
        \quad\text{or}\quad
        \omega_a(\rvp^\star) = \omega_b(\rvp^\star) = 1\,.
    \end{align}
\end{restatable}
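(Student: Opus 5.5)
The plan is to first describe the structure of a minimiser $\rvp^\star$ explicitly, and then compare $\omega_a(\rvp^\star)$ and $\omega_b(\rvp^\star)$ through an exchange argument on that structure.

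\emph{Step 1 (structure of $\rvp^\star$).} Fix a minimiser $\rvp^\star$. Choose an assignment $\sigma:\cY\to[M]$ with $\sigma(y)\in\argmax_m \rvp^\star_{m,y}$, and set $A_m=\sigma^{-1}(m)$ and $Q_m=\sum_{y\in A_m}\rvq^\star_y$.

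For any $\rvp$ we have $\ell^\textsc{wta}(y,\rvp)\le -\log \rvp_{\sigma(y),y}$, with equality at $\rvp^\star$. Hence $\rvp^\star$ also minimises the separable surrogate
\begin{align*}
-\sum_m\sum_{y\in A_m}\rvq^\star_y\log \rvp_{m,y}.
\end{align*}
By Gibbs' inequality, applied head by head, every head with $Q_m>0$ satisfies $\rvp^\star_{m,y}=\rvq^\star_y/Q_m$ for $y\in A_m$. Such a head puts no mass outside $A_m\cap\mathrm{supp}(\rvq^\star)$. Since $\sigma(y)$ is a maximiser, this gives
\begin{align*}
\omega_y(\rvp^\star)=\rvq^\star_y/Q_{\sigma(y)}\quad\text{for } \rvq^\star_y>0.
\end{align*}
Plugging this in, the optimal risk takes the form
\begin{align*}
\sum_m \varphi(Q_m)-\sum_y \rvq^\star_y\log \rvq^\star_y, \qquad \varphi(t)=t\log t.
\end{align*}
Minimising it over $\rvp$ is therefore the same as choosing a partition of $\mathrm{supp}(\rvq^\star)$ into at most $M$ cells that minimises $\sum_m\varphi(Q_m)$. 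This is the max-entropy quantisation picture.

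\emph{Step 2 (degenerate cases).} We split on the size of the support.

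If $|\mathrm{supp}(\rvq^\star)|\le M$, singleton cells achieve zero loss, so the optimal loss is $0$. Every $y$ in the support then has some head with $\rvp^\star_{m,y}=1$, so $\omega_y(\rvp^\star)=1$. Given $\rvq^\star_a>\rvq^\star_b$, the item $a$ is in the support. Either $\omega_b(\rvp^\star)<1=\omega_a(\rvp^\star)$, or both equal $1$, and the claim holds.

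Otherwise $|\mathrm{supp}(\rvq^\star)|>M$. Then every head has $Q_m>0$: if some head had $Q_m=0$, splitting a non-singleton cell would strictly decrease $\sum_m\varphi(Q_m)$, because $\varphi$ is strictly superadditive on positive reals.

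Two easy subcases remain.
\begin{itemize}
\item If $\rvq^\star_b=0$, then all heads put zero mass on $b$, so $\omega_b(\rvp^\star)=0<\omega_a(\rvp^\star)$.
\item If $a$ and $b$ lie in the same cell, the strict inequality follows directly from the formula $\rvq^\star_y/Q_{\sigma(y)}$.
\end{itemize}

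\emph{Step 3 (exchange argument, the main step).} Let $a\in A_i$ and $b\in A_j$ with $i\ne j$, $\rvq^\star_a>\rvq^\star_b>0$. Suppose, for contradiction, that $\omega_a(\rvp^\star)\le\omega_b(\rvp^\star)$ and that the two are not both equal to $1$.

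The assumption reads $\rvq^\star_a/Q_i\le \rvq^\star_b/Q_j$, which gives
\begin{align*}
Q_i-Q_j\ \ge\ Q_j\,\frac{\rvq^\star_a-\rvq^\star_b}{\rvq^\star_b}\ \ge\ d,\qquad d:=\rvq^\star_a-\rvq^\star_b>0,
\end{align*}
using $Q_j\ge \rvq^\star_b$. I will check that both inequalities cannot be equalities simultaneously unless $\omega_a(\rvp^\star)=\omega_b(\rvp^\star)=1$:
\begin{itemize}
\item $Q_j=\rvq^\star_b$ forces $A_j=\{b\}$, hence $\omega_b(\rvp^\star)=1$;
\item $Q_i=Q_j\rvq^\star_a/\rvq^\star_b$ then forces $Q_i=\rvq^\star_a$, hence $A_i=\{a\}$ and $\omega_a(\rvp^\star)=1$.
\end{itemize}
So under our assumption we get $Q_i-Q_j>d$.

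Now swap $a$ and $b$ between the two cells. This moves mass $d$ from cell $i$ to cell $j$, and since $0<d<Q_i-Q_j$ the two cell masses become strictly more balanced. By strict convexity of $\varphi$, the quantity $\varphi(Q_i)+\varphi(Q_j)$ strictly decreases. Rebuilding heads $i,j$ as the normalised restrictions of $\rvq^\star$ to the swapped cells therefore gives a prediction with strictly smaller expected WTA loss, contradicting optimality of $\rvp^\star$.

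The step I expect to need the most care is this exchange step: tracking the equality cases precisely so that the only surviving ties are the saturated ones with $\omega_a(\rvp^\star)=\omega_b(\rvp^\star)=1$. Handling unused heads, which only occur when the support has at most $M$ items, is a secondary point and is covered by Step 2.
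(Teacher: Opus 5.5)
Your proof is correct, and its core step, swapping $a$ and $b$ between their cells and using strict convexity of $t\log t$ on the cell masses, is the same computation as the paper's ``strictness up to saturation'' step. The differences are in how the proof is organised around that step.

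\textbf{Weak inequality.} The paper first obtains $\omega_a(\rvp^\star)\ge\omega_b(\rvp^\star)$ with no structural information at all, by applying the transposition $(a\,b)$ to every head. Since the risk equals $-\sum_y\rvq^\star_y\log\omega_y(\rvp)$, this changes it by exactly $(\rvq^\star_a-\rvq^\star_b)\log(\omega_a/\omega_b)$. The paper then uses the cell structure only to exclude ties. You instead fold both directions into a single contradiction on cells: you assume $\omega_a\le\omega_b$ and derive $Q_i-Q_j>d$.

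\textbf{Closed form.} You derive $\omega_y(\rvp^\star)=\rvq^\star_y/Q_{\sigma(y)}$ yourself, via the surrogate upper bound and Gibbs' inequality applied head by head. The paper imports this from Theorem~\ref{thm:closed-form}. Working with an argmax selection $\sigma$ rather than the strict-inequality tessellation also avoids items that land in no cell because of ties.

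\textbf{Degenerate heads.} Your support-size dichotomy covers the edge cases uniformly. With at most $M$ support points the risk is zero and every support item is saturated. Otherwise strict superadditivity forces all $Q_m>0$, so every head is pinned down by Gibbs. This replaces the paper's separate treatment of $\rvq^\star_b=0$, where a zero-mass head $B$ is replaced by $e_a$.

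The paper's route buys a weak inequality that follows from symmetry alone. Yours buys a self-contained proof with cleaner bookkeeping of degenerate heads.

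One small imprecision: $Q_j=\rvq^\star_b$ only forces $A_j\cap\mathrm{supp}(\rvq^\star)=\{b\}$, not $A_j=\{b\}$. This does not matter, since $\omega_b(\rvp^\star)=1$ is all you use.
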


Proof can be found in \cref{ssec:proof:prop:wta-emax-ranking}. The ranking prediction associated to COMPOTE thus preserves 
the order of the Bayes predictor strictly, except in the single \emph{saturation} case $\omega_a(\rvp^\star) = \omega_b(\rvp^\star) = 1$;
 understanding this failure case is informative for the choice of $M$. Intuitively, it happens when (given some $\rvx$) at
  least two items $a$ and $b$ have large probabilities $\rvq_a^\star, \rvq_b^\star \geq 1/M$: 
  for this $\rvx$, one hypothesis $m$ will be such that $\rvp^\star_{m,i} = \indicator{i=a}$ (see \cref{thm:closed-form} for why
   this happens), another hypothesis $m'$ will be such that $\rvp^\star_{m',i} = \indicator{i=b}$. Hence, even though $\rvq_a^\star > \rvq_b^\star$, then $\phi_a = \phi_b = 1$. This advocates for using values for $M$ that are not too large, which is validated experimentally in \cref{fig:num-hypotheses}.

The performance of using the aggregation $\omega$ over the learning done via annealing of $\ell_\tau$ is also validated empirically in the recommendation setting on several Amazon 2023 datasets.

\paragraph{Preprocessing and protocol.}
\label{sec:empirical-evidence}
We evaluate on the three Amazon~2023 subsets introduced in \cref{sec:problem-statement-learning-problem}.
Following prior work~\citep{rajput2023tiger,zhou2020s3rec,zheng2025mtgrec}, we apply the benchmark \emph{5-core} filtering~\citep{amazon2023fivecore}, which iteratively discards users and items with fewer than five interactions, and truncate histories to $50$ items at both training and test time.
We adopt the leave-one-out split of \citet{kang2018sasrec}: the last interaction of each user is the test target, the penultimate one is used for validation and the remainder for training.
We report Hit Rate (H@$K$) and NDCG (N@$K$) for $K\in\{5,10\}$, computed by ranking the entire catalog rather than sampled negatives; since each test instance has a single relevant item, H@$K$ coincides with Recall@$K$.
\paragraph{Baselines.}

We compare against seven ID-based sequential recommenders (Caser~\citep{tang2018caser}, HGN~\citep{ma2019hgn}, GRU4Rec~\citep{hidasi2016gru4rec}, BERT4Rec~\citep{sun2019bert4rec}, SASRec~\citep{kang2018sasrec}, HSTU~\citep{zhai2024hstu} and S3-Rec~\citep{zhou2020s3rec}) and against the generative recommender TIGER~\citep{rajput2023tiger}, whose numbers we take from \citet{zheng2025mtgrec}.
Because COMPOTE predicts several hypotheses per query, we also compare against six multi-interest recommenders that represent a user by several vectors: MIND~\citep{li2019mind}, ComiRec-DR and ComiRec-SA~\citep{cen2020comirec}, SINE~\citep{tan2021sine}, Re4~\citep{zhang2022re4} and REMI~\citep{xie2023remi}, which we train ourselves under the protocol above.
\cref{sec:compared-models} describes each method.

In \cref{tab:results-baselines}, COMPOTE is best in $11$ of the $12$ columns, the exception being H@10 on Scientific where HSTU ($4.29$) and TIGER ($4.22$) edge it out ($4.20$).
Against TIGER, the strongest baseline on NDCG, COMPOTE improves N@5 by $9.7\%$ on Scientific and $9.6\%$ on Game, and ties on Instrument ($2.44 \to 2.45$).
Note that TIGER performs autoregressive generative retrieval over semantic identifiers derived from item text and decodes them with beam search at inference, while COMPOTE sees item identifiers only and ranks the catalogue in a single forward pass: modelling the multimodality of the next-item distribution suffices to beat a content-aware generative recommender.

\paragraph{Multi-interest models trade precision for coverage.}
Multi-interest recommenders are the closest analogue of COMPOTE, yet none reaches the second-best value of any column, and only SINE on Game H@10 ($8.80$) exceeds SASRec.
SINE is within $5\%$ of COMPOTE on that column but $31\%$ below on N@5 ($2.77$ against $4.01$): maximising over interest vectors widens the candidate pool without placing the target near the top, whereas COMPOTE improves both.

\begin{table*}[t]
    \caption{Performance (\%) on three Amazon~2023 subsets (H@$K$: Hit Rate@$K$, N@$K$: NDCG@$K$).
    First-block numbers are from \citet{zheng2025mtgrec}; multi-interest models are trained by us under the same protocol (\cref{sec:compared-models}).
    COMPOTE uses max-similarity aggregation (\cref{eq:max-similarity-aggregation}), equivalent to max-softmax pooling (\cref{sec:scaling-normalization}).
    \textbf{Bold}: best, \underline{underline}: second best.}
    \label{tab:results-baselines}
    \centering
    \footnotesize
    \setlength{\tabcolsep}{4pt}
    \resizebox{\textwidth}{!}{%
    \begin{tabular}{lcccccccccccc}
        \toprule
        & \multicolumn{4}{c}{Instrument} & \multicolumn{4}{c}{Scientific} & \multicolumn{4}{c}{Game} \\
        \cmidrule(lr){2-5} \cmidrule(lr){6-9} \cmidrule(lr){10-13}
        Method & H@5 & H@10 & N@5 & N@10 & H@5 & H@10 & N@5 & N@10 & H@5 & H@10 & N@5 & N@10 \\
        \midrule
        Caser~\citep{tang2018caser} & 2.41 & 3.86 & 1.51 & 1.97 & 1.59 & 2.57 & 1.01 & 1.32 & 3.30 & 5.53 & 2.09 & 2.81 \\
        HGN~\citep{ma2019hgn} & 3.21 & 5.17 & 2.02 & 2.65 & 2.12 & 3.51 & 1.31 & 1.76 & 4.24 & 6.87 & 2.71 & 3.56 \\
        GRU4Rec~\citep{hidasi2016gru4rec} & 3.24 & 5.01 & 2.09 & 2.66 & 2.02 & 3.38 & 1.29 & 1.73 & 4.99 & 7.99 & 3.20 & 4.16 \\
        BERT4Rec~\citep{sun2019bert4rec} & 3.07 & 4.85 & 1.95 & 2.52 & 1.86 & 2.96 & 1.19 & 1.55 & 4.60 & 7.35 & 2.98 & 3.86 \\
        SASRec~\citep{kang2018sasrec} & 3.33 & 5.23 & 2.13 & 2.74 & 2.59 & 4.12 & 1.50 & 1.99 & 5.35 & 8.47 & 3.31 & 4.38 \\
        HSTU~\citep{zhai2024hstu} & 3.43 & \underline{5.77} & 1.91 & 2.71 & \underline{2.71} & \textbf{4.29} & 1.47 & 1.98 & \underline{5.78} & \underline{9.03} & 3.34 & 4.42 \\
        S3-Rec~\citep{zhou2020s3rec} & 3.17 & 4.96 & 1.99 & 2.57 & 2.63 & 4.18 & 1.71 & 2.19 & 4.85 & 7.69 & 3.15 & 4.06 \\
        TIGER~\citep{rajput2023tiger} & \underline{3.70} & 5.64 & \underline{2.44} & \underline{3.06} & 2.64 & \underline{4.22} & \underline{1.75} & \underline{2.26} & 5.59 & 8.68 & \underline{3.66} & \underline{4.67} \\
        \midrule
        MIND~\citep{li2019mind} & 2.71 & 4.42 & 1.65 & 2.20 & 2.11 & 3.29 & 1.27 & 1.65 & 3.22 & 5.20 & 1.99 & 2.65 \\
        ComiRec-DR~\citep{cen2020comirec} & 2.67 & 4.37 & 1.64 & 2.19 & 2.14 & 3.41 & 1.30 & 1.72 & 3.21 & 5.23 & 1.96 & 2.60 \\
        ComiRec-SA~\citep{cen2020comirec} & 2.64 & 4.31 & 1.62 & 2.15 & 2.08 & 3.31 & 1.26 & 1.65 & 3.20 & 5.19 & 1.95 & 2.61 \\
        SINE~\citep{tan2021sine} & 3.08 & 5.20 & 1.52 & 2.24 & 2.36 & 3.83 & 1.21 & 1.68 & 5.26 & 8.80 & 2.77 & 3.87 \\
        Re4~\citep{zhang2022re4} & 2.13 & 3.74 & 1.23 & 1.66 & 1.61 & 2.66 & 0.95 & 1.27 & 3.15 & 5.61 & 1.98 & 2.68 \\
        REMI~\citep{xie2023remi} & 2.73 & 4.34 & 1.71 & 2.22 & 2.20 & 3.86 & 1.21 & 1.70 & 4.14 & 7.42 & 2.29 & 3.30 \\
        \midrule
        COMPOTE (ours) & \textbf{3.75} & \textbf{5.83} & \textbf{2.45} & \textbf{3.10} & \textbf{2.84} & 4.20 & \textbf{1.92} & \textbf{2.35} & \textbf{6.06} & \textbf{9.26} & \textbf{4.01} & \textbf{5.04} \\
        \bottomrule
    \end{tabular}%
    }
\end{table*}

\subsection{Fast Retrieval for COMPOTE}
\label{sec:scaling-normalization}

For large-scale recommender systems, it is desirable to have a model that is compatible with \emph{vector databases} to enable a retrieval scaling sublinearly with $N$. As $f_{m,i}(\rvx) = {\rm softmax}(\cos(h_{m}(x), \rve_i)/\beta)$ (see \cref{fig:mcl-transformer}), there is already a dependency linear in $N$ with the computation of the softmax.
This section shows that, for the purpose of ranking the items for recommendation, the softmax can be dropped at inference -- it is possible to apply $\omega$ directly on the cosine similarities to obtain scores to rank the items.
Formally, it means replace at inference $g = \argsort \circ \omega \circ f$ by
\begin{align}
    \label{eq:max-similarity-aggregation}
    \tilde{g} = \argsort \circ \omega \circ s
    \quad\text{where}\quad
    s_{m,i}(\rvx) = \cos(h_m(\rvx), \rve_i)/\beta)
\end{align}

Due to the following identity: $\log \max_{m\leq M}\left(\softmax_i(\rvs_{m,\cdot})\right) = \max_{m\leq M}\big(\rvs_{m,i} - \log Z_m\big)$ where $Z_m=\sum_{i=1}^N\exp(\rvs_{m,i})$, $g$ and $\tilde{g}$ rank items identically if $Z_m$ is constant across the hypotheses. This turns out to happen in practice as the standard deviation of $Z_m$ across hypotheses is largely lower that its value (\cref{fig:z-m-repartition}).
Experiments confirm this as using $\tilde{g}$ rather than $g$ comes at no loss in ranking accuracy (\cref{tab:aggregation}).

This equivalence is a key deployment result: the model requires neither a catalog-wide softmax nor its normalizer, making the model directly compatible with standard vector databases. For this reason, we report most of the results with $\tilde{g}$ (ex: \cref{tab:results-ablation})

\begin{table}[t]
    \caption{Effect of the inference aggregator on COMPOTE performance (\%) on Game, for the same trained model with $M=3$ heads and the same checkpoint.
    ``Normalizer'' indicates whether ranking requires the per-head catalogue partition function $Z_m$.
    \textbf{Bold}: best.}
    \label{tab:aggregation}
    \centering
    \small
    \setlength{\tabcolsep}{5pt}
    \begin{tabular}{llcccc}
        \toprule
        Aggregator & Normalizer & H@5 & H@10 & N@5 & N@10 \\
        \midrule
        $g$ -- "max softmax" & required & 6.04 & \textbf{9.27} & \textbf{4.01} & \textbf{5.05} \\
        $\tilde{g}$ -- "max cosine similarity" & not required & \textbf{6.06} & 9.26 & \textbf{4.01} & 5.04 \\
        \bottomrule
    \end{tabular}
\end{table}

\subsection{Ablations: Where Do the Gains Come From?}
\label{sec:ablations}

COMPOTE could outperform a single-head model for three distinct reasons, additional capacity, naive ensembling, or mixture-of-experts (MoE) routing. We isolate each with a matched-capacity variant (\cref{tab:results-ablation}).
COMPOTE outperforms all three on every dataset and metric.\\
\emph{Not a bigger model:} MCT-SASRec spends the parameter budget of COMPOTE on a single hypothesis with a deeper prediction head.
It is the strongest of the three variants and still trails COMPOTE everywhere (N@5 of $2.34$ against $2.45$ on Instrument, $1.71$ against $1.92$ on Scientific, $3.78$ against $4.01$ on Game). At equal capacity, the multi-hypothesis parameterization beats a deeper single-hypothesis head.\\
\emph{Not an ensemble:} MCT-EnsCE keeps the architecture of COMPOTE but trains every head on every example with equal weight, without confidence supervision or annealing.
It minimizes $\ell^{\rm ens}(\rvy,f(\rvx)) = -\frac{1}{M}\sum_{m\leq M}\log f_m(\rvx)_{\rvy}$, the $\tau\to\infty$ limit of the soft assignment in \cref{eq:awta}.
It falls below COMPOTE on every dataset and metric, so the gains are not explained by architectural multiplicity or naive ensembling alone.\\
\emph{Not a mixture of experts:} MCT-Router follows the routing recipe of multi-interest recommendation (\cref{sec:introduction}): it keeps the shared encoder and replaces the training of COMPOTE by a learned gate that selects one expert per query.
It trails COMPOTE everywhere (N@5 of $2.28$ against $2.45$ on Instrument, $1.64$ against $1.92$ on Scientific, $3.78$ against $4.01$ on Game).
The next section traces this gap to head collapse: without the annealed winner-takes-all objective, routing fails to induce distinct hypotheses (\cref{sec:collapse-oracle}).

\begin{table*}[t]
    \caption{Ablation over the prediction head and the training objective (\%).
    Variants share the same encoder backbone and parameter budget, and rank items with max-similarity aggregation (\cref{eq:max-similarity-aggregation}):
    MCT-SASRec is a single-hypothesis iso-parameter head, MCT-EnsCE uses an ensemble cross-entropy objective, and MCT-Router routes queries to a single expert.
    \textbf{Bold}: best.}
    \label{tab:results-ablation}
    \centering
    \footnotesize
    \setlength{\tabcolsep}{4pt}
    \resizebox{\textwidth}{!}{%
    \begin{tabular}{lcccccccccccc}
        \toprule
        & \multicolumn{4}{c}{Instrument} & \multicolumn{4}{c}{Scientific} & \multicolumn{4}{c}{Game} \\
        \cmidrule(lr){2-5} \cmidrule(lr){6-9} \cmidrule(lr){10-13}
        Method & H@5 & H@10 & N@5 & N@10 & H@5 & H@10 & N@5 & N@10 & H@5 & H@10 & N@5 & N@10 \\
        \midrule
        MCT-SASRec & 3.63 & 5.70 & 2.34 & 2.97 & 2.55 & 3.87 & 1.71 & 2.13 & 5.73 & 8.84 & 3.78 & 4.78 \\
        MCT-EnsCE & 3.54 & 5.53 & 2.28 & 2.96 & 2.42 & 3.80 & 1.60 & 2.03 & 5.74 & 8.81 & 3.78 & 4.77 \\
        MCT-Router & 3.57 & 5.48 & 2.28 & 2.89 & 2.40 & 3.60 & 1.64 & 2.00 & 5.74 & 8.81 & 3.78 & 4.77 \\
        \midrule
        COMPOTE (ours) & \textbf{3.75} & \textbf{5.83} & \textbf{2.45} & \textbf{3.10} & \textbf{2.84} & \textbf{4.20} & \textbf{1.92} & \textbf{2.35} & \textbf{6.06} & \textbf{9.26} & \textbf{4.01} & \textbf{5.04} \\
        \bottomrule
    \end{tabular}%
    }
\end{table*}

\section{Hypothesis Specialization}
\label{sec:hypothesis-specialization}

We first show that, unlike the matched-capacity variants of \cref{sec:ablations}, COMPOTE learns hypotheses that do not collapse and carry complementary rankings.
We then explain this phenomenon by characterizing what each hypothesis $\rvp^\star_m$ learns at the winner-takes-all limit $\tau\to0$.

\subsection{Collapse and Oracle Evidence}
\label{sec:collapse-oracle}

We quantify collapse through the mean pairwise cosine similarity of the head outputs.
For a query $\rvx$, let $\bar{\rvh}_m(\rvx)=\rvh_m(\rvx)/\|\rvh_m(\rvx)\|_2$ and define $S(\rvx) = \frac{1}{M(M-1)}\sum_{j,k=1}^{M} \bar{\rvh}_j(\rvx)^\top \bar{\rvh}_k(\rvx)\,$.
We report its test-set average; $S=1$ means that all heads have the same direction.
MCT-EnsCE and MCT-Router reach $S=1.00$ on all three datasets, whereas COMPOTE obtains $0.64$, $0.61$ and $0.58$ on \textsc{Instrument}, \textsc{Scientific} and \textsc{Game}, respectively (\cref{tab:head-collapse}).
Thus, neither architectural multiplicity nor learned routing is sufficient to prevent collapse.

\paragraph{Oracle head selection.}
To test whether COMPOTE's distinct heads contain complementary rankings, we compare the max-similarity rule of \cref{eq:max-similarity-aggregation} with an oracle that observes the target and selects the head assigning it the highest similarity,
\begin{equation}
    \label{eq:oracle-head}
    m^{\rm orc}(\rvx,\rvy)=\argmax_{m\leq M}s_{m,\rvy}(\rvx)\,,
\end{equation}
then ranks every item using $s_{m^{\rm orc}(\rvx,\rvy),i}(\rvx)$.
The oracle is not deployable, but it measures whether one hypothesis ranks the target substantially better than the aggregated list.
For MCT-EnsCE and MCT-Router, aggregated and oracle N@5 coincide exactly on every dataset: selecting among collapsed heads cannot help (\cref{tab:oracle-gap}).
For COMPOTE, oracle selection raises N@5 by $53\%$ on \textsc{Instrument} ($2.45\to3.75$), $83\%$ on \textsc{Scientific} ($1.92\to3.52$) and $94\%$ on \textsc{Game} ($4.01\to7.78$).
The ranking gains therefore coincide with a qualitative change in the learned predictor: COMPOTE produces distinct heads whose rankings contain complementary signal.

These findings raise a structural question: why does COMPOTE's training create complementary hypotheses, whereas ensemble and routed objectives collapse?

\subsection{What Each Hypothesis Learns}
\label{sec:what-hypotheses-learn}

For a fixed query, \cref{thm:closed-form} describes the optimal hypotheses through three properties:
\begin{enumerate}
    \item[(P1)] \emph{Partition.} The catalog splits into disjoint cells $\cY_1,\ldots,\cY_M$, each hypothesis supported on its own cell.
    \item[(P2)] \emph{Faithfulness within cells.} On $\cY_m$, $\rvp^\star_m$ is $\rvq^\star$ renormalized by $Q_m$: each hypothesis ranks the items of its cell exactly as the Bayes predictor does, and therefore still depends on the query through $\rvq^\star$.
    \item[(P3)] \emph{Balance.} The partition maximizes the entropy of $(Q_m)_{m=1}^M$, so the cells split the mass of $\rvq^\star$ as evenly as the catalogue allows.
\end{enumerate}

\begin{restatable}[Extension of Th. 1 -- \cite{rupprecht2017mhp}]{theorem}{thmClosedForm}
    \label{thm:closed-form}
    Let $\rvq^\star \in \Delta^{N\mhyphen1}$, and denote the optimal prediction $\rvp^\star \in \argmin_{\rvp\in\left(\Delta^{N\mhyphen1}\right)^M} \lE_{\rvy\sim\rvq^\star}[\ell^\textsc{wta}(\rvy, \rvp)]$.
    Then for every $m\leq M$ and $y\in\cY$,
    \begin{align}
        \label{eq:closed-form}
        \rvp^\star_{m,y}
        = \frac{\rvq^\star_y}{Q_m}\indicator{y\in \cY_m}\,,
        \quad\text{with}\quad
        Q_m = \sum_{y\in\cY_m} \rvq^\star_{y}\,.
    \end{align}
    where the tessalation is $\cY_m = \left\{y \in \cY : ~ \forall m' \neq m,~ -\log(\rvp^\star_{m,y}) < -\log(\rvp^\star_{m',y})\right\}$. Moreover the tessalation $(\cY_m)_{m=1}^M$ is entropy-maximizing.
\end{restatable}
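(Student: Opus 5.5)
Our plan is to reduce the population problem to a two-level optimization: first over assignments of items to hypotheses, then over the hypotheses themselves for a fixed assignment. For any $\rvp\in(\Delta^{N\mhyphen1})^M$, let $\sigma(y)\in\argmax_{m}\rvp_{m,y}$ be a winning hypothesis (ties broken arbitrarily). Then
\begin{align*}
\lE_{\rvy\sim\rvq^\star}[\ell^\textsc{wta}(\rvy,\rvp)] = -\sum_{y\in\cY}\rvq^\star_y\log \rvp_{\sigma(y),y}.
\end{align*}
Conversely, every assignment $\sigma:\cY\to[M]$ satisfies $-\sum_y \rvq^\star_y\log\rvp_{\sigma(y),y}\ge \lE[\ell^\textsc{wta}]$. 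Hence the infimum of the WTA risk equals
\begin{align*}
\inf_{\sigma}\,\inf_{\rvp}\; -\sum_{m\le M}\sum_{y:\sigma(y)=m}\rvq^\star_y\log\rvp_{m,y},
\end{align*}
and this joint problem decouples across hypotheses once the cells $\cY_m=\sigma^{-1}(m)$ are fixed.

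\textbf{Inner problem.} For a fixed cell $\cY_m$ with mass $Q_m>0$, the problem $\min_{\rvp_m\in\Delta^{N\mhyphen1}} -\sum_{y\in\cY_m}\rvq^\star_y\log\rvp_{m,y}$ is a Gibbs-inequality (cross-entropy) problem. Putting any mass outside $\cY_m$ is strictly suboptimal, and the unique minimizer is $\rvp_{m,y}=\rvq^\star_y\indicator{y\in\cY_m}/Q_m$, which is \cref{eq:closed-form}. Its value is $-\sum_{y\in\cY_m}\rvq^\star_y\log\rvq^\star_y + Q_m\log Q_m$.

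\textbf{Outer problem.} Summing over $m$, the total value is $H(\rvq^\star) - H\big((Q_m)_{m}\big)$. Minimizing the risk is therefore equivalent to choosing the partition that maximizes the entropy of the cell masses, which gives the entropy-maximizing claim. Since $\cY$ is finite there are finitely many partitions, so a minimizer exists and every minimizer $\rvp^\star$ arises this way.

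\textbf{Consistency with the tessellation.} It remains to check that the cells $\cY_m$ used above coincide with the strict-argmin tessellation in the statement. For $y$ with $\rvq^\star_y>0$, $y\in\cY_m$ gives $\rvp^\star_{m,y}>0$ while $\rvp^\star_{m',y}=0$ for $m'\neq m$, because the cells are disjoint and each $\rvp^\star_{m'}$ is supported on its own cell. So $-\log\rvp^\star_{m,y}<+\infty=-\log\rvp^\star_{m',y}$, and the strict inequality holds.

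We also need to show that any minimizer has this disjoint-support form. Suppose $\rvp^\star$ is optimal and let $\sigma$ be its winner assignment. If some $\rvp^\star_m$ put mass outside $\sigma^{-1}(m)$, then replacing it by the inner-problem solution would strictly decrease the risk, contradicting optimality.

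\textbf{Main obstacle: degenerate cases.} We expect the bookkeeping here to be the hardest part, in three places:
\begin{itemize}
\item Items with $\rvq^\star_y=0$ and empty cells ($Q_m=0$). In these cases the minimizer is non-unique and the strict-inequality definition of $\cY_m$ can fail or leave items unassigned. We would handle this by stating the result up to null items: either restrict to $\mathrm{supp}(\rvq^\star)$, or note that the entropy-maximizing partition can be taken with all $Q_m>0$ whenever $|\mathrm{supp}\,\rvq^\star|\ge M$.
\item When $|\mathrm{supp}\,\rvq^\star|<M$, the extra heads are arbitrary.
\item We use the convention $\log 0=-\infty$ throughout.
\end{itemize}
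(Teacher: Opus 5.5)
Your proof is correct. For the entropy-maximizing claim it matches the paper's argument: the paper also plugs the cell predictor $\rvp^C_{m,y}=\rvq^\star_y\indicator{y\in C_m}/Q^C_m$ into the WTA risk, obtains $H(\rvq^\star)-H(Q^C)$, and concludes from the optimality of $\rvp^\star=\rvp^{C^\star}$ that $C^\star$ maximizes $H(Q^C)$.

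Where you differ is the closed form itself. The paper does not derive it. It defers it to the centroidal-tessellation result of Rupprecht et al.; the cross-reference in its proof actually points back to the theorem being proved. You instead prove it from scratch: you exchange $\inf_{\rvp}$ with the infimum over winner assignments $\sigma$, decouple across heads, and apply Gibbs' inequality within each cell. This gives you four things the paper does not have:
\begin{itemize}
\item a self-contained argument;
\item the identity $\min_{\rvp}\lE[\ell^\textsc{wta}]=\min_\sigma\,[H(\rvq^\star)-H(Q^\sigma)]$ over all predictors, not just over cell predictors;
\item the fact that every minimizer, not merely some minimizer, has the stated form;
\item an explicit account of when the strict-inequality tessellation is well defined.
\end{itemize}
The paper's statement glosses over that last point. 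For instance, with $\rvq^\star$ a Dirac mass and $M=2$, taking both heads equal to that Dirac is optimal, both strict cells are empty, and the formula breaks. So your restriction to $|\mathrm{supp}\,\rvq^\star|\ge M$ (or to the support) is genuinely needed.

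One step to tighten: when $|\mathrm{supp}\,\rvq^\star|\ge M$ you need that \emph{every} entropy-maximizing partition has all $Q_m>0$, not merely that one can be chosen so. If some winner assignment $\sigma$ of an optimal $\rvp^\star$ had $Q^\sigma_m=0$, your replacement argument gives no strict decrease for head $m$. That head would then be unconstrained and could create ties on support items.

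The fix is short. By pigeonhole, an empty cell forces some other cell to contain at least two support items. Moving one of them, of mass $u\in(0,Q_A)$, into the empty cell strictly increases $H(Q)$, because $(a+b)\log(a+b)>a\log a+b\log b$ for $a,b>0$. Once every cell has positive mass, each $\rvp^\star_m$ is forced into closed form with disjoint supports on $\mathrm{supp}\,\rvq^\star$, so no ties occur and your consistency check goes through.
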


Proof can be found in \cref{ssec:thm:proof:closed-form}.

COMPOTE is trained at $\tau>0$ with softmax heads, so the exact zeros of \cref{eq:closed-form} cannot be reached.
We test each property through its observable signature.
Mirroring the tessellation of \cref{thm:closed-form}, the empirical cell of head $m$ on query $\rvx$ is $\cY_m(\rvx)=\{y\in\cY : m=\argmax_{m'} f_{m',y}(\rvx)\}$, and the \emph{winning head} of a test pair $(\rvx,\rvy)$ is the head whose cell contains $\rvy$.
All diagnostics are computed on \textsc{Game}, with $M=3$ unless stated otherwise, and \cref{tab:latent-tiling-theory} reports them for $M\in\{3,4,5\}$.
The protocol is detailed in \cref{app:latent-tiling}, which also reports the same partition on \textsc{Scientific}.

\begin{table}[h]
    \caption{Signatures of \cref{thm:closed-form} on \textsc{Game} as the number of hypotheses $M$ varies.
    All columns are computed on the full test set of $94{,}762$ query--target pairs.
    Win frequencies and sector shares are reported as the range over heads.}
    \label{tab:latent-tiling-theory}
    \centering
    \small
    \setlength{\tabcolsep}{4pt}
    \begin{tabular}{ccccccc}
        \toprule
        $M$ & In-cell mass & Win frequency (\%) & Sector share (\%) & $\hat\cL^\textsc{wta}$ & $\hat\cL^\textsc{wta}+\log M$ & Sector agreement \\
        \midrule
        3 & $0.929$ & $[32.0,\,34.4]$ & $[27.9,\,36.3]$ & $7.32$ & $8.41$ & $98.5\%$ \\
        4 & $0.915$ & $[24.5,\,25.6]$ & $[23.4,\,27.1]$ & $7.02$ & $8.41$ & $98.7\%$ \\
        5 & $0.937$ & $[18.9,\,21.1]$ & $[15.5,\,25.3]$ & $6.71$ & $8.32$ & $99.4\%$ \\
        \bottomrule
    \end{tabular}
\end{table}

\paragraph{(P1) The hypotheses partition the catalogue.}
The non-collapsed head directions of \cref{tab:head-collapse} are a first signature of (P1).
The partition holds for the full distributions and not only for the top of the rankings: each head keeps between $91.5\%$ and $93.7\%$ of its softmax mass inside its own cell, for every $M$.
The retrieved sets are therefore nearly disjoint.
The top-$10$ lists of two heads on the same query have a mean Jaccard overlap of $0.008$, and $91.8\%$ of the catalogue items retrieved at test time are reached by a single head only.
Both sharpen with $M$, reaching $0.001$ and $98.6\%$ at $M=5$.
\Cref{fig:head-partition} shows the resulting partition: projected onto the plane of the head directions, the catalogue splits into three sectors, one per hypothesis.

\begin{figure}[!htbp]
    \centering
    \begin{subfigure}[t]{0.32\textwidth}
        \centering
        \includegraphics[width=\linewidth]{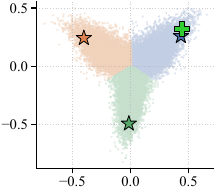}
    \end{subfigure}\hfill
    \begin{subfigure}[t]{0.32\textwidth}
        \centering
        \includegraphics[width=\linewidth]{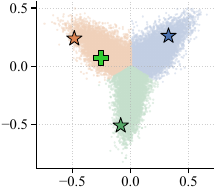}
    \end{subfigure}\hfill
    \begin{subfigure}[t]{0.32\textwidth}
        \centering
        \includegraphics[width=\linewidth]{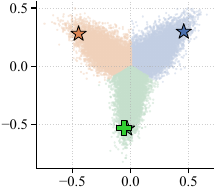}
    \end{subfigure}
    \caption{\textbf{Catalogue tiling on \textsc{Game} ($M=3$).}
    Normalized items (dots) and head outputs (stars) are projected onto the plane spanned by the static head directions $\bar{\rvh}_m$, the average normalized output of each head over the test set.
    Items are coloured by the direction they are most aligned with, and the green cross marks the target.
    Each panel is a test query won by a different head, and in each the target lies in the sector of its winning head.}
    \label{fig:head-partition}
\end{figure}

\paragraph{(P2) Each hypothesis remains query-dependent.}
By (P2), a hypothesis is not a fixed list of items: its ranking follows $\rvq^\star(\rvx)$.
Empirically, the top-$10$ of a given head overlaps by only $0.017$ to $0.026$ Jaccard between two different histories, and in \cref{fig:head-partition} the head closest to the target changes from query to query.
(P1) and (P2) together explain the oracle gap in \cref{tab:oracle-gap}.
At the optimum, the target $\rvy$ belongs to a single cell $\cY_m$, where it competes only with the other items of $\cY_m$, ranked by $\rvq^\star$.
Selecting that head therefore ranks $\rvy$ no worse than the Bayes predictor over the full catalogue.
The cosine oracle of \cref{eq:oracle-head} makes exactly this choice, as it picks the winning head on $98.9\%$ to $99.7\%$ of test pairs.
This is the behavior of COMPOTE, where the oracle-selected head ranks the target substantially higher than the pooled list, while collapsed models exhibit no oracle gain.

\paragraph{(P3) The cells split the mass evenly.}
The win frequency of a head (the fraction of test targets it wins), estimates $\lE_\rvx[Q_m(\rvx)]$ without access to $\rvq^\star$.
Win frequencies stay within $1.6$ points of $1/M$ for every $M$, and their normalized entropy exceeds $0.999$.
Balance holds in mass and not in item counts: the heads own between $15.5\%$ and $25.3\%$ of the catalogue at $M=5$, yet each wins close to $20\%$ of the targets.
This is the balance (P3) predicts, since $Q_m$ weights items by $\rvq^\star$, so a head with fewer items holds items that are more often the next one.
Balance also matters for aggregation: if $Q_m=1/M$ for all $m$, \cref{eq:closed-form} gives $\omega(\rvp^\star)=M\rvq^\star$, so max-pooling recovers the Bayes ranking exactly.
Balance also fixes the value of the optimal loss, and \cref{ssec:log-m-law} checks that the test loss follows the predicted $\log M$ law.

\section{Discussion and Conclusion}
\label{sec:discussion-conclusion}

We introduced COMPOTE, which uses annealed multiple-hypothesis learning to produce complementary rankings for sequential recommendation.
Our theoretical and empirical results show that it prevents head collapse, preserves ranking quality, and improves performance across three benchmarks.
We leave combining COMPOTE with semantic identifiers for future work, which could improve recommendations for rare items.
It also remains to explain why its theoretically query-dependent cells form an almost static catalogue tiling and how this structure relates to item-embedding geometry (\cref{app:static-tiling}).

\subsection*{AI use statement}
We used generative AI tools to polish the manuscript, primarily by correcting grammar and improving clarity, 
and to assist with portions of the experimental code. Parts of the theoretical proofs (the generalization bounds) were also drafted with the help 
of generative AI, but all proofs were verified and corrected by the authors.

\subsection*{Acknowledgments}
This project was provided with computing HPC and storage resources by GENCI at IDRIS thanks to the grant 2026-A0191016862 on the supercomputer Jean Zay's V100 partition.

\bibliographystyle{plainnat}
\bibliography{main}

\appendix

\section{Proofs - \cref{sec:problem-statement-learning-problem}}
\label{sec:problem-statement-learning-problem-proofs}

\subsection{Preliminary results}

\begin{restatable}[$\ell_\tau$ as a convex conjugate]{proposition}{propLogSumExpConjugate}
    \label{prop:log-sum-exp-conjugate}
    For any $(\rvy,\rvp) \in \cY\times \left(\Delta^{N\mhyphen1}\right)^M$ and for any $\tau \geq 0$,
    \begin{align}
    \label{eq:log-sum-exp-conjugate}
    \ell_\tau(\rvy, \rvp) = \min_{\gamma\in\Delta^{M\mhyphen1}} \sum_{m\leq M} -\gamma \log(\rvp_{m,\rvy}) - \tau H(\gamma)\,.
\end{align}
with optimal $\gamma$ given by
\begin{align}
    \label{eq:gamma-optimal}
    \gamma(\rvy, \rvp) = \softmax\left(\frac{1}{\tau}\log(\rvp_{\cdot,\rvy})\right)\,.
\end{align}
\end{restatable}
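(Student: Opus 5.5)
This is the Gibbs variational principle (the Donsker–Varadhan duality for log-sum-exp). Fix $(\rvy,\rvp)$ and write $a_m \triangleq -\log(\rvp_{m,\rvy}) \in [0,+\infty]$. The objective is then $F_\tau(\gamma)=\sum_m \gamma_m a_m - \tau H(\gamma)$, where the sum is read as $\sum_m \gamma_m a_m$. The claim is that $\min_{\gamma\in\Delta^{M\mhyphen1}} F_\tau(\gamma) = -\tau\log\sum_m e^{-a_m/\tau}$. By the definition in \cref{eq:loss-free-energy}, the right-hand side is exactly $\ell_\tau(\rvy,\rvp)$, since $e^{-a_m/\tau}=\rvp_{m,\rvy}^{1/\tau}$.

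\textbf{Case $\tau>0$ with all $a_m$ finite.} Set $\gamma^\star_m = e^{-a_m/\tau}/Z$ with $Z=\sum_{m'} e^{-a_{m'}/\tau}$; this is precisely the softmax in \cref{eq:gamma-optimal}. For any $\gamma\in\Delta^{M\mhyphen1}$, substitute $a_m = -\tau\log\gamma^\star_m - \tau\log Z$ to get
\begin{align}
F_\tau(\gamma) = \tau\sum_m \gamma_m\log\frac{\gamma_m}{\gamma^\star_m} - \tau\log Z = \tau\,\mathrm{KL}(\gamma\,\|\,\gamma^\star) - \tau\log Z .
\end{align}
Gibbs' inequality gives $\mathrm{KL}\ge 0$, with equality iff $\gamma=\gamma^\star$. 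Hence the minimum equals $-\tau\log Z=\ell_\tau(\rvy,\rvp)$ and is attained uniquely at $\gamma^\star$. Evaluating $F_\tau(\gamma^\star)$ also yields \cref{prop:loss-tau-to-awta} directly.

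\textbf{Case $\tau>0$ with some $\rvp_{m,\rvy}=0$.} Then $a_m=+\infty$. Adopt the conventions $0\cdot\infty=0$ and $0\log 0=0$. Any $\gamma$ charging such an $m$ gives $F_\tau=+\infty$, so the minimization reduces to the face of the simplex spanned by the indices with finite $a_m$. The previous argument applies verbatim on that face, and the softmax weights of those $m$ are $0$ as required. If every $\rvp_{m,\rvy}=0$, both sides equal $+\infty$ and \cref{eq:gamma-optimal} is degenerate, so I will either note this explicitly or exclude it.

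\textbf{Case $\tau=0$.} This is the main obstacle, because $\ell_0$ is defined only by continuity through \cref{prop:tau-to-wta}. At $\tau=0$ the objective is linear, $\sum_m\gamma_m a_m$. Its minimum over the simplex is $\min_m a_m$, attained at vertices (or convex combinations) supported on the argmin. This value equals $\ell^\textsc{wta}=\ell_0$, so the identity holds. The softmax in \cref{eq:gamma-optimal} must then be read as its $\tau\to0$ limit, namely the uniform distribution on $\argmin_m a_m$, which is one minimizer. I will state that uniqueness fails in the presence of ties.

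For completeness, I will also confirm that the value of the minimum is continuous in $\tau$ at $0$. The bound $\min_m a_m - \tau\log M \le \min_\gamma F_\tau \le \min_m a_m$ gives this immediately and matches \cref{prop:tau-to-wta}.
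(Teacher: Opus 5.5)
Your proof is correct and takes essentially the paper's route: the paper's proof is the one-line remark that log-sum-exp is the convex conjugate of the negative entropy, and your identity $F_\tau(\gamma)=\tau\,\mathrm{KL}(\gamma\,\|\,\gamma^\star)-\tau\log Z$ is the standard proof of that duality. Your separate handling of zero probabilities and of $\tau=0$ (the softmax read as its limit, uniform on the argmin), together with the sandwich bound giving continuity at $\tau=0$, fills in details the paper leaves implicit.
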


\begin{proof}
    The log-sum-exp is the convex conjugate of the negative entropy.
\end{proof}

\subsection{Proof of \cref{prop:tau-to-wta}}
\label{ssec:prop:proof:tau-to-wta}

\propTauToWta*
\begin{proof}[Proof of \cref{prop:tau-to-wta}]
    The proof is a direct consequence of \cref{prop:log-sum-exp-conjugate} as $\lim_{\tau \to 0^+}\softmax\left(\frac{1}{\tau}\log(\rvp_{\cdot, \rvy})\right) = \max_{m\leq M}\frac{1}{\tau}\log(\rvp_{m, \rvy})$.
\end{proof}

\subsection{Proof of \cref{prop:loss-tau-to-awta}}
\label{ssec:prop:proof:loss-tau-to-awta}

\propLossTauToAWta*
\begin{proof}[Proof of \cref{prop:loss-tau-to-awta}]
    This is a direct consequence of \cref{prop:log-sum-exp-conjugate}.
\end{proof}

\subsection{Proof of \cref{cor:awta-gradient}}
\label{ssec:prop:proof:awta-gradient}

\corAwtaGradient*
\begin{proof}[Proof of \cref{cor:awta-gradient}]
    \begin{align}
        \nabla_\theta \psi_\tau(\rvy,\rvp^\theta,\alpha)|_{\alpha = \gamma_m(\rvy, \rvp^\theta)}
        = -\sum_{m\leq M} \alpha_m \frac{\nabla_\theta \rvp^\theta_{m,\rvy}}{\rvp^\theta_{m,\rvy}}
        = -\sum_{m\leq M} \frac{(\rvp^\theta_{m,\rvy})^{1/\tau - 1}}{\sum_{m'\leq M} (\rvp^\theta_{m',\rvy})^{1/\tau}}
        \nabla_\theta \rvp^\theta_{m,\rvy}\,.
    \end{align}
    On the other hand, differentiating $\ell_\tau(\rvy,\rvp^\theta) = -\tau \log\big(\sum_{m\leq M}
    (\rvp^\theta_{m,\rvy})^{1/\tau}\big)$ directly gives
    \begin{align}
        \nabla_\theta \ell_\tau(\rvy,\rvp^\theta)
        = -\tau \frac{\sum_{m\leq M} \frac{1}{\tau} (\rvp^\theta_{m,\rvy})^{1/\tau - 1}\nabla_\theta \rvp^\theta_{m,\rvy}}
        {\sum_{m'\leq M} (\rvp^\theta_{m',\rvy})^{1/\tau}}
        = -\sum_{m\leq M} \frac{(\rvp^\theta_{m,\rvy})^{1/\tau - 1}}{\sum_{m'\leq M} (\rvp^\theta_{m',\rvy})^{1/\tau}}
        \nabla_\theta \rvp^\theta_{m,\rvy}\,.
    \end{align}
    This concludes the proof.
\end{proof}

\section{Proofs of \cref{sec:compote-to-ranking}}
\label{sec:proofs:compote-to-ranking}

\subsection{Proof of \cref{prop:loss-via-pooling}}
\label{ssec:proof:prop:loss-via-pooling}

\propLossViaPooling*
\begin{proof}[Proof of \cref{prop:loss-via-pooling}]
    Fix $(\rvy,\rvp)$. For $\tau > 0$, by definition of $\omega^\tau$ in \cref{eq:loss-via-pooling},
    \begin{align}
        -\log\left(\omega^\tau_\rvy\left(\rvp\right)\right)
        = -\log\left(\left(\sum_{m\leq M} \rvp_{m,\rvy}^\frac{1}{\tau}\right)^\tau\right)
        = -\tau\log\left(\sum_{m\leq M} \rvp_{m,\rvy}^\frac{1}{\tau}\right)
        = \ell_\tau(\rvy,\rvp)\,,
    \end{align}
    the last equality being the definition of $\ell_\tau$ in \cref{eq:loss-free-energy}.

    For $\tau = 0$, understood as the limit $\tau\to 0^+$ on both sides, $\omega^0 = \omega$ gives $-\log\big(\omega^0_\rvy(\rvp)\big) = -\log\max_{m\leq
    M} \rvp_{m,\rvy} = \ell^\textsc{wta}(\rvy,\rvp)$, which equals $\lim_{\tau\to0^+}
    \ell_\tau(\rvy,\rvp)$ by \cref{prop:tau-to-wta}.
\end{proof}

\subsection{Proof of \cref{prop:emax-ranking}}
\label{ssec:proof:prop:wta-emax-ranking}

\propEmaxRanking*

\begin{proof}[Proof of \cref{prop:emax-ranking}]
    Denote the inner risk by $L(\rvp) = \lE_{\rvy\sim\rvq^\star}[\ell^\textsc{wta}(\rvy, \rvp)]$. By
    \cref{prop:loss-via-pooling},
    \begin{align}
        \label{eq:prop:emax-ranking:inner-risk}
        L(\rvp) = -\sum_{y\in\cY}\rvq^\star_y \log \omega_y(\rvp)\,,
    \end{align}
    so that, $L(\rvp^\star)$ being finite, $\omega_y(\rvp^\star) > 0$ for every $y$ with
    $\rvq^\star_y > 0$. Fix $a, b \in \cY$ with $\rvq^\star_a > \rvq^\star_b$; in particular
    $\rvq^\star_a > 0$ and $\omega_a(\rvp^\star) > 0$.

    \emph{Weak inequality.} Assume $\omega_b(\rvp^\star) > 0$, the conclusion being trivial
    otherwise, and let $\Pi_{ab}$ be the operator associated with the transposition $\pi = (a\,b)$.
    Then
    \begin{align}
        L(\Pi_{ab}\rvp^\star) - L(\rvp^\star)
        &= -\rvq^\star_a\log\omega_b(\rvp^\star) - \rvq^\star_b\log\omega_a(\rvp^\star)
           + \rvq^\star_a\log\omega_a(\rvp^\star) + \rvq^\star_b\log\omega_b(\rvp^\star) \\
        &= (\rvq^\star_a - \rvq^\star_b)\log\frac{\omega_a(\rvp^\star)}{\omega_b(\rvp^\star)}\,,
    \end{align}
    the terms at $y \notin \{a, b\}$ cancelling and the remaining logarithms being finite. Since
    $L(\Pi_{ab}\rvp^\star) \geq L(\rvp^\star)$ and $\rvq^\star_a > \rvq^\star_b$, this gives
    $\omega_a(\rvp^\star) \geq \omega_b(\rvp^\star)$.

    \emph{Strictness up to saturation.} Assume now $\omega_a(\rvp^\star) = \omega_b(\rvp^\star)$; we
    show that this common value is $1$. Let $(\cY_m)_{m\leq M}$ be the tessellation of
    \cref{thm:closed-form}, $m(y)$ the index of the cell containing $y$ and $Q_m$ the mass of
    $\cY_m$, so that $\omega_y(\rvp^\star) = \rvp^\star_{m(y),y} = \rvq^\star_y / Q_{m(y)}$ whenever
    $Q_{m(y)} > 0$, and, by \eqref{eq:prop:emax-ranking:inner-risk},
    \begin{align}
        \label{eq:prop:emax-ranking:risk-of-cells}
        L(\rvp^\star)
        = -\sum_{y\in\cY} \rvq^\star_y\log\frac{\rvq^\star_y}{Q_{m(y)}}
        = H(\rvq^\star) + \sum_{m\leq M} \phi(Q_m)\,,
        \qquad \phi(t) = t\log t,\quad \phi(0) = 0\,.
    \end{align}

    Suppose first $\rvq^\star_b > 0$, so that $Q_A, Q_B > 0$ where $A = m(a)$ and $B = m(b)$. The
    items $a$ and $b$ lie in distinct cells: were they in the same cell, the ratio
    $\omega_a(\rvp^\star)/\omega_b(\rvp^\star) = \rvq^\star_a/\rvq^\star_b > 1$ would contradict the
    assumed equality. Exchanging $a$ and $b$ between the two cells produces a partition of $\cY$
    with masses $Q_A - u$ and $Q_B + u$, where $u = \rvq^\star_a - \rvq^\star_b > 0$, the other cells
    being unchanged; the predictor it defines through \eqref{eq:closed-form} is admissible, so
    \eqref{eq:prop:emax-ranking:risk-of-cells} and the optimality of $\rvp^\star$ give
    \begin{align}
        \label{eq:prop:emax-ranking:exchange}
        g(u) \geq g(0)\,,
        \qquad\text{where}\qquad
        g(t) = \phi(Q_A - t) + \phi(Q_B + t)\,.
    \end{align}
    Writing $\Delta = Q_A - Q_B$, we have $g'(t) = \log\frac{Q_B + t}{Q_A - t}$, so $g$ decreases on
    $(-\infty, \Delta/2]$ and increases on $[\Delta/2, Q_A)$; being symmetric about $\Delta/2$, it
    satisfies $g(u) \geq g(0)$ with $u > 0$ only if $u \geq \Delta$. Consequently
    \begin{align}
        \rvq^\star_a Q_B - \rvq^\star_b Q_A
        \;\geq\; (\rvq^\star_b + \Delta) Q_B - \rvq^\star_b (Q_B + \Delta)
        \;=\; \Delta\,(Q_B - \rvq^\star_b)
        \;\geq\; 0\,,
    \end{align}
    using $\rvq^\star_a \geq \rvq^\star_b + \Delta$ and $Q_B \geq \rvq^\star_b$ if $\Delta > 0$, and
    $\rvq^\star_a > \rvq^\star_b$ with $Q_A \leq Q_B$ if $\Delta \leq 0$. The assumed equality reads
    $\omega_a(\rvp^\star) - \omega_b(\rvp^\star) = (\rvq^\star_a Q_B - \rvq^\star_b Q_A)/(Q_A Q_B) =
    0$ and therefore forces $\Delta > 0$, $\rvq^\star_a = \rvq^\star_b + \Delta$ and $Q_B =
    \rvq^\star_b$, whence $\omega_b(\rvp^\star) = \rvq^\star_b/Q_B = 1$.

    Suppose finally $\rvq^\star_b = 0$. If $Q_B > 0$ then $\omega_b(\rvp^\star) = \rvq^\star_b/Q_B =
    0 < \omega_a(\rvp^\star)$, excluding the equality. Otherwise every $y \in \cY_B$ has
    $\rvq^\star_y = 0$; let $\rvp'$ be $\rvp^\star$ with head $B$ replaced by $e_a$. For $y \notin
    \cY_B \cup \{a\}$ the maximum defining $\omega_y$ is attained by a head other than $B$, so
    $\omega_y(\rvp') = \omega_y(\rvp^\star)$; the items of $\cY_B$ carry no mass; and
    $\omega_a(\rvp') = 1 \geq \omega_a(\rvp^\star)$. Hence $L(\rvp') \leq L(\rvp^\star)$ with
    equality only if $\omega_a(\rvp^\star) = 1$, and the optimality of $\rvp^\star$ forces
    $\omega_a(\rvp^\star) = 1$.
\end{proof}

\section{Per-Head Normalizer}
\label{app:scaling-normalization}

\paragraph{Protocol.}
For the $M=3$ \textsc{Game} checkpoint of \cref{tab:aggregation}, we compute
$\log Z_m(\rvx)=\log\sum_{i\leq N}\exp(s_{m,i}(\rvx))$ over all $N=25{,}612$ items for each of the $94{,}762$ test queries, with $s_{m,i}=\cos(h_m,\rve_i)/\beta$ and $\beta=0.07$.

\paragraph{The normalizer is head-independent.}
The three distributions nearly overlap (\cref{fig:z-m-repartition}), with means $13.67$, $13.63$, and $13.79$.
At fixed $\rvx$, the cross-head standard deviation of $\log Z_m(\rvx)$ is $0.13$ on average; its range has median $0.28$, $95$th percentile $0.62$, and maximum $1.10$.

\paragraph{Effect on the ranking.}
Because $\log\max_m\softmax_i(\rvs_{m,\cdot})=\max_m(\rvs_{m,i}-\log Z_m)$, a ranking change requires max-cosine scores within $\beta$ times the cross-head range: $0.020$ for the median query, $0.043$ at the $95$th percentile, and $0.077$ at most.
Accordingly, \cref{tab:aggregation} shows no measurable loss from dropping the normalizer.

\begin{figure}[!htbp]
    \centering
    \includegraphics[width=0.6\linewidth]{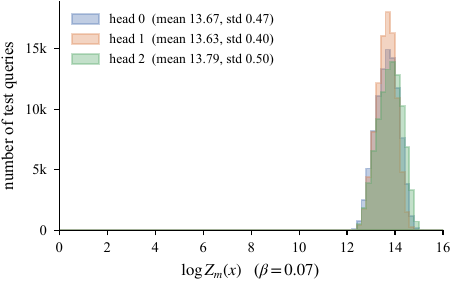}
    \caption{\textbf{Distribution of the per-head log-normalizer $\log Z_m(\rvx)$.}
    Histogram of $\log Z_m(\rvx)=\log\sum_{i\leq N}\exp\big(s_{m,i}(\rvx)\big)$ over the $94{,}762$ test queries of \textsc{Game}, one colour per head, for the COMPOTE checkpoint with $M=3$ of \cref{tab:aggregation} ($\beta=0.07$, $N=25{,}612$).
    The three heads have almost superimposed distributions, concentrated between $12.2$ and $15.0$ on a scale where $\log Z_m$ could a priori take any value in $[\log N - 1/\beta,\ \log N + 1/\beta]=[-4.1, 24.4]$.
    For a fixed query, the standard deviation of $\log Z_m(\rvx)$ across the three heads is $0.13$ on average, against a mean value of $13.7$.}
    \label{fig:z-m-repartition}
\end{figure}

\section{Generalization Bound}
\label{app:generalization-bound}

\subsection{Proof of \cref{thm:wta-generalization}}
\label{app:proof-wta-generalization}
We first provide here a generalization bound for the
log-loss in \cref{thm:wta-generalization}, which is a direct consequence of
 the Rademacher complexity of the hypothesis class $\mathcal{F}^M$.
 We then provide a refined version of the generalization bound
 in \cref{cor:wta-generalization-refined}, which has a milder
  dependence on the number of heads $M$.

\begin{restatable}[log-loss generalization bound]{theorem}{thmWtaGeneralization}
    \label{thm:wta-generalization}
    Let $\mathcal{F}$ be a hypothesis class of probability-generating heads
    $f_{m}: \mathcal{X} \to \Delta^{N\mhyphen1}$. Assume there exists a constant $\epsilon > 0$
    such that $f_{m,i}(\mathbf{x}) \ge \epsilon$ for all $m \in [M]$, $\mathbf{x} \in \mathcal{X}$, and $i \in [N]$.
    Then, for any $\delta \in (0, 1)$, with probability at least $1 - \delta$ over the sample
    $S \sim P_{\mathbf{x}, \mathbf{y}}^n$, every ensemble $f = (f_{1}, \dots, f_{M}) \in \mathcal{F}^M$ satisfies:
    \begin{equation}
        \label{eq:wta-generalization-bound}
        \mathcal{L}^\textsc{wta}(f, P_{\mathbf{x}, \mathbf{y}}) \le \widehat{\mathcal{L}}^\textsc{wta}_S(f) + \frac{2}{\epsilon} \sum_{m=1}^M \mathcal{R}_n(\mathcal{F}_{m,\mathbf{y}}) +  \log\left(\frac{1}{\epsilon}\right) \sqrt{\frac{\ln(2/\delta)}{2n}},
    \end{equation}
    where $\mathcal{R}_n(\mathcal{F}_{m,\mathbf{y}})$ is the expected Rademacher complexity of the
    set of functions  $\mathcal{F}_{m,\mathbf{y}} = \{ \mathbf{x} \mapsto f_{m,\mathbf{y}}(\mathbf{x}) : f_{m} \in \mathcal{F} \}$.
\end{restatable}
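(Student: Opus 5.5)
The plan is the standard Rademacher route for bounded losses, followed by a Lipschitz–contraction argument that splits across the $M$ heads.

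\emph{Step 1: boundedness.} Since $\epsilon \le f_{m,i}(\mathbf{x}) \le 1$, each $-\log f_{m,\mathbf{y}}(\mathbf{x})$ lies in $[0,\log(1/\epsilon)]$. Hence $\ell^\textsc{wta}(\mathbf{y},f(\mathbf{x})) = \min_m -\log f_{m,\mathbf{y}}(\mathbf{x})$ also lies in this interval.

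\emph{Step 2: uniform deviation.} Consider the loss class $\mathcal{G} = \{(\mathbf{x},\mathbf{y}) \mapsto \ell^\textsc{wta}(\mathbf{y},f(\mathbf{x})) : f\in\mathcal{F}^M\}$. Apply McDiarmid's inequality to $\sup_{g\in\mathcal{G}} (\lE g - \hat{\lE}_S g)$; each sample changes it by at most $\log(1/\epsilon)/n$. Then apply symmetrization. Together these give, with probability at least $1-\delta$,
\[
\mathcal{L}^\textsc{wta}(f) \le \widehat{\mathcal{L}}^\textsc{wta}_S(f) + 2\mathcal{R}_n(\mathcal{G}) + \log(1/\epsilon)\sqrt{\ln(2/\delta)/(2n)}.
\]
The $2/\delta$ is only needed if we pass to the empirical Rademacher complexity; with the expected one, $1/\delta$ suffices, so $2/\delta$ is a valid but looser constant.

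\emph{Step 3: peel the min and the log.} Write $\min_m u_m = u_1 - \sum_{m\ge 2}(u_{m-1}' - u_m)_+$ recursively, or more simply use the known fact that for the pointwise min (or max) of $M$ classes, $\mathcal{R}_n(\min(\mathcal{G}_1,\dots,\mathcal{G}_M)) \le \sum_m \mathcal{R}_n(\mathcal{G}_m)$. This fact follows from $\min(a,b) = \tfrac12(a+b-|a-b|)$ by induction and Talagrand's contraction, since $|\cdot|$ is 1-Lipschitz. Here $\mathcal{G}_m = \{(\mathbf{x},\mathbf{y})\mapsto -\log f_{m,\mathbf{y}}(\mathbf{x})\}$.

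\emph{Step 4: contraction for each head.} The map $t\mapsto -\log t$ is $1/\epsilon$-Lipschitz on $[\epsilon,1]$. Talagrand's contraction lemma (applied with the label $\mathbf{y}$ fixed per sample, which is exactly how $\mathcal{F}_{m,\mathbf{y}}$ is defined) then gives $\mathcal{R}_n(\mathcal{G}_m) \le \frac1\epsilon \mathcal{R}_n(\mathcal{F}_{m,\mathbf{y}})$. Summing over $m$ and multiplying by $2$ yields the term $\frac{2}{\epsilon}\sum_m \mathcal{R}_n(\mathcal{F}_{m,\mathbf{y}})$.

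The main obstacle is Step 3, the Rademacher bound for a minimum of $M$ classes with coefficient exactly $1$ on each summand. The naive induction via $\min(a,b)=\frac{a+b-|a-b|}{2}$ gives coefficient $1$ for two classes, since $\tfrac12(\mathcal{R}(a)+\mathcal{R}(b)+\mathcal{R}(a)+\mathcal{R}(b))$. However, iterating naively over $M$ classes can inflate the constants.

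To avoid this, I would instead use the vector-contraction argument directly. Bound $\lE_\sigma \sup \sum_i \sigma_i \min_m u_m(z_i)$ by conditioning on all but one $\sigma_i$, as in the proof of Talagrand's lemma. For the two choices of $\sigma_i$, the difference $\min_m u_m - \min_m u'_m$ is at most $\max_m |u_m - u'_m| \le \sum_m |u_m-u'_m|$. This peels each coordinate into a sum over heads with coefficient $1$, which is the standard Ledoux–Talagrand-style "max of classes" lemma, e.g.\ Mohri et al., Lemma 9.1. I would cite that lemma rather than re-derive it.
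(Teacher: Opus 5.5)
Your proposal is correct and follows the paper's proof essentially step for step: McDiarmid with bounded differences $\log(1/\epsilon)/n$, then symmetrization, then the bound $\widehat{\mathcal{R}}_S(\min_m \mathcal{G}_m)\le\sum_m \widehat{\mathcal{R}}_S(\mathcal{G}_m)$, then Talagrand contraction with constant $1/\epsilon$ per head; citing the max-of-classes lemma of Mohri et al.\ (via $\min=-\max(-\,\cdot)$) is a more precise justification of the min step than the paper's appeal to Bartlett--Mendelson. Your worry about the induction is unfounded, though: $\min(g_1,\dots,g_M)=\min(g_1,\min(g_2,\dots,g_M))$ keeps coefficient $1$ at every stage and is exactly how that lemma is proved, and in your conditioning sketch only the $\ell_\infty$ bound $\max_m|u_m-u'_m|$ is usable (together with $\mathbb{E}_{\mathbf{s}}|\sum_m s_m d_m|\ge\max_m|d_m|$), not its weakening to $\sum_m|u_m-u'_m|$.
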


\begin{proof}[Proof of \cref{thm:wta-generalization}]
\textbf{Step 1: Boundedness and Lipschitz continuity of the loss function.} \\
Consider the scalar loss function $ -\log(t)$ restricted to $t \in [\epsilon, 1]$.
\begin{itemize}
    \item \textbf{Upper Bound ($B$):} Since $t \ge \epsilon$, the loss is non-negative and bounded above by:
    \[
    B = \sup_{t \in [\epsilon, 1]} (-\log t) = -\log(\epsilon) = \log\left(\frac{1}{\epsilon}\right).
    \]
    Thus, for every sample $(\mathbf{x}_i, \mathbf{y}_i)$, $\min_{m \in [M]} (-\log f_{m,\mathbf{y}_i}(\mathbf{x}_i)) \in [0, B]$.
    \item \textbf{Lipschitz Continuity ($L_\epsilon$):} By the Mean Value Theorem, the Lipschitz constant is bounded by the supremum of the derivative:
    \[
    L_\epsilon = \sup_{t \in [\epsilon, 1]} \left| \frac{d}{dt} (-\log t) \right| = \sup_{t \in [\epsilon, 1]} \frac{1}{t} = \frac{1}{\epsilon}.
    \]
    Therefore, for any $t_1, t_2 \ge \epsilon$, we have $|(-\log t_1) - (-\log t_2)| \le \frac{1}{\epsilon} |t_1 - t_2|$.
\end{itemize}

\noindent \textbf{Step 2: Concentration via McDiarmid's Inequality.} \\
We want to show that the empirical loss on $n$ finite samples
$\widehat{\mathcal{L}}^\textsc{wta}_S(f)$ is close to the true expected loss $\mathcal{L}^\textsc{wta}(f)$ uniformly across all possible models in our hypothesis class $\mathcal{F}^M$.
Define the worst-case generalisation gap function:
\[
\Phi(S) = \sup_{f \in \mathcal{F}^M} \left( \mathcal{L}^\textsc{wta}(f, P_{\mathbf{x}, \mathbf{y}}) - \widehat{\mathcal{L}}^\textsc{wta}_S(f) \right).
\]
Suppose we change a single data point in $S = \{(\mathbf{x}_1, \mathbf{y}_1), \dots, (\mathbf{x}_n, \mathbf{y}_n)\}$
to create a neighboring dataset
$S' = \{(\mathbf{x}_1, \mathbf{y}_1), \dots, (\mathbf{x}_i', \mathbf{y}_i'), \dots, (\mathbf{x}_n, \mathbf{y}_n)\}$.
The change in $\Phi(S)$ can be bounded as:
\[
\left\vert \Phi(S) - \Phi(S') \right\vert \le \sup_{f \in \mathcal{F}^M} \frac{1}{n} \left\vert \min_{m} (-\log f_m(\mathbf{x}_i)_{\mathbf{y}_i}) - \min_{m} (-\log f_m(\mathbf{x}_i')_{\mathbf{y}_i'}) \right\vert \le \frac{B}{n} = \frac{\log(1/\epsilon)}{n}.
\]
Because changing one sample alters $\Phi(S)$ by at most $c_i = \frac{\log(1/\epsilon)}{n}$,
we can apply McDiarmid's inequality to bound the deviation of $\Phi(S)$ from its expectation
\citep{bousquet2002stability}:
\[
\mathbb{P} \left( \Phi(S) - \mathbb{E}_S[\Phi(S)] \ge t \right) \le \exp\left( \frac{-2 t^2}{\sum_{i=1}^n c_i^2} \right).
\]
Evaluating $\sum_{i=1}^n c_i^2 = n \cdot \left(\frac{\log(1/\epsilon)}{n}\right)^2 = \frac{\log^2(1/\epsilon)}{n}$.
Set the probability tail $\exp\left( \frac{-2 t^2 n}{\log^2(1/\epsilon)} \right) = \frac{\delta}{2}$.
Then, solving for $t$ gives
\[
t = \log\left(\frac{1}{\epsilon}\right) \sqrt{\frac{\ln(2/\delta)}{2n}}.
\]
Thus, with probability at least $1 - \frac{\delta}{2}$:
\[
\sup_{f \in \mathcal{F}^M} \left( \mathcal{L}^\textsc{wta}(f, P_{\mathbf{x}, \mathbf{y}}) - \widehat{\mathcal{L}}^\textsc{wta}_S(f) \right) \le \mathbb{E}_S \left[ \sup_{f \in \mathcal{F}^M} \left( \mathcal{L}^\textsc{wta} - \widehat{\mathcal{L}}^\textsc{wta}_S \right) \right] + \log\left(\frac{1}{\epsilon}\right) \sqrt{\frac{\ln(2/\delta)}{2n}}.
\]

\noindent \textbf{Step 3: Symmetrization and min-operator decomposition.} \\
Our goal now is to bound the expected supremum
$\mathbb{E}_S   \left[\sup_{f \in \mathcal{F}^M} \left( \mathcal{L}^\textsc{wta} - \widehat{\mathcal{L}}^\textsc{wta} \right) \right]$ in terms of the Rademacher complexity of the hypothesis class $\mathcal{F}^M$.
For doing so, we can redefine $\mathbb{E}_S[\Phi(S)]$ as follows:
\[
\mathbb{E}_S[\Phi(S)] = \mathbb{E}_S \left[ \sup_{f \in \mathcal{F}^M} \left( \mathbb{E}_{S'} [\widehat{\mathcal{L}}^\textsc{wta}_{S'}(f)] - \widehat{\mathcal{L}}^\textsc{wta}_S(f) \right) \right],
\]
where $S'$ is an independent ghost sample drawn from the same distribution.
Then by gathering the terms, and using Jensen's inequality, we can pull the expectation over $S'$ outside of the supremum:
\[
\mathbb{E}_S[\Phi(S)] \le \mathbb{E}_{S, S'} \left[ \sup_{f \in \mathcal{F}^M} \frac{1}{n} \sum_{i=1}^n \left( \ell(f; \mathbf{x}_i', \mathbf{y}_i') - \ell(f; \mathbf{x}_i, \mathbf{y}_i) \right) \right].
\]
Now, since $(\mathbf{x}_i, \mathbf{y}_i)$ and $(\mathbf{x}_i', \mathbf{y}_i')$ are identically distributed,
introducing independent random sign variables $\sigma_i \in \{-1, +1\}$ with $\mathbb{P}(\sigma_i = 1) = 1/2$
 does not change the distribution of their difference:
 \[
 \mathbb{E}_S[\Phi(S)] \le \mathbb{E}_{S, S'} \left[ \sup_{f} \frac{1}{n} \sum_{i=1}^n \sigma_i \left( \ell(f; \mathbf{x}_i', \mathbf{y}_i') - \ell(f; \mathbf{x}_i, \mathbf{y}_i) \right) \right],
 \]
 and using the triangle inequality leads to the following decomposition:
\begin{align*}
\mathbb{E}_S[\Phi(S)]
&\leq \mathbb{E}_{S', \boldsymbol{\sigma}}
 \left[\sup_f \frac{1}{n}\sum_{i=1}^n
 \sigma_i\ell(f;\mathbf{x}_i',\mathbf{y}_i')\right] \\
&\quad + \mathbb{E}_{S, \boldsymbol{\sigma}}
 \left[\sup_f \frac{1}{n}\sum_{i=1}^n
 (-\sigma_i)\ell(f;\mathbf{x}_i,\mathbf{y}_i)\right] \\
&= 2\,\mathbb{E}_S
 \left[\widehat{\mathcal{R}}_S(\mathcal{G}_{\text{emax}})\right]
 = 2\,\mathcal{R}_n(\mathcal{G}_{\text{emax}}),
\end{align*}
 where $\widehat{\mathcal{R}}_S(\mathcal{G}_{\text{emax}})$ is the empirical Rademacher
 complexity of the composite pointwise minimum class:
 \[
 \mathcal{G}_{\text{emax}} = \left\{ (\mathbf{x}, \mathbf{y})
 \mapsto \min_{m \in [M]} (-\log f_{m,{\mathbf{y}}}(\mathbf{x})) : f_{m,\cdot} \in \mathcal{F} \right\},
 \]
and $\mathcal{R}_n(\mathcal{G}_{\text{emax}}) = \mathbb{E}_S[\widehat{\mathcal{R}}_S(\mathcal{G}_{\text{emax}})]$.

For a fixed sample $S$, using the structural property of Rademacher
 complexity under pointwise minimum operators over $M$ hypothesis classes
 (decomposition of the pointwise minimum as a weighted sum of indicator functions and using results
 from \citet{bartlett2002rademacher}), 
 we bound $\widehat{\mathcal{R}}_S(\mathcal{G}_{\text{emax}})$ as:
\begin{align}
\label{eq:empirical-min-rademacher-bound-sum}
\widehat{\mathcal{R}}_S(\mathcal{G}_{\text{emax}}) \le \sum_{m=1}^M \widehat{\mathcal{R}}_S(-\log \circ \mathcal{F}_{m, y}),
\end{align}
where $\mathcal{F}_{m, y} = \{ \mathbf{x} \mapsto f_{m, y}(\mathbf{x}) : f \in \mathcal{F} \}$.
Since $t \mapsto -\log t$ is $(1/\epsilon)$-Lipschitz on $[\epsilon, 1]$, applying
Talagrand's Contraction Lemma \citep{pmlr-v32-mohri14,ledoux1991probability} yields:
\[
\widehat{\mathcal{R}}_S(-\log \circ \mathcal{F}_{m, y}) \le \frac{1}{\epsilon} \widehat{\mathcal{R}}_S(\mathcal{F}_{m, y}).
\]
Taking expectations $\mathbb{E}_S$ on both sides gives $\mathcal{R}_n(\mathcal{G}_{\text{emax}}) \le \frac{1}{\epsilon} \sum_{m=1}^M \mathcal{R}_n(\mathcal{F}_{m, y})$.

\paragraph{Final combination.}
Substituting into $\mathbb{E}_S[\Phi(S)] \le 2\mathcal{R}_n(\mathcal{G}_{\text{emax}})$ and applying concentration yields the stated bound.
\end{proof}


\subsection{Proof of \cref{cor:wta-generalization-refined}}
\label{app:proof-wta-generalization-refined}
\thmWtaGeneralizationRefined*

\begin{proof}[Proof of \cref{cor:wta-generalization-refined}]

To prove the theorem, we need to show that the expected Rademacher complexity of the minimum operator over $M$ heads
satisfies:
\begin{align}
\mathbb{E}_{S,\boldsymbol{\sigma}} \left[ \sup_{f \in \mathcal{F}^M}
\frac{1}{n} \sum_{i=1}^n \sigma_i \min_{m \in [M]} \left(-\log f_{m,\mathbf{y}}(\mathbf{x}_i)\right) \right] \le \frac{1}{\epsilon} \max_{m \in [M]} \mathcal{R}_n(\mathcal{F}_{m, \mathbf{y}}) + B \sqrt{\frac{2 \ln M}{n}}.
\label{eq:refined-minrademacher-bound}
\end{align}
and plugging this bound into the generalization bound of \cref{thm:wta-generalization} gives the desired result.
Basically, we replace the crude sum over heads in \cref{eq:empirical-min-rademacher-bound-sum} by 
this Massart-type control of the minimum operator, which yields a sharper dependence on~$M$. Hence, 
we only need to prove \cref{eq:refined-minrademacher-bound}. That is done in the following steps.

Let $\mathcal{G}_m = \{ (\mathbf{x}, \rvy) \mapsto -\log f_{m,\mathbf{y}}(\mathbf{x}) : f_m \in \mathcal{F} \}$
 be the loss function class for head $m$, where each function $g_m \in \mathcal{G}_m$ takes
 values in $[0, B]$ with $B = \log(1/\epsilon)$.

Let $g_{m}(\mathbf{z}_i) = -\log f_{m,\mathbf{y}}(\mathbf{x}_i)$, where $\mathbf{z}_i = (\mathbf{x}_i, \rvy_i)$.
 We analyze the empirical Rademacher process for a fixed sample  $S$:
 \[
 \widehat{\mathcal{R}}_S(\mathcal{G}_{\text{emax}})
 = \mathbb{E}_{\boldsymbol{\sigma}} \left[ \sup_{g_1 \in \mathcal{G}_1, \dots, g_M \in \mathcal{G}_M} \frac{1}{n} \sum_{i=1}^n \sigma_i \min_{m \in [M]} g_m(\mathbf{z}_i) \right].
 \]

\noindent \textbf{Pointwise minimum upper bound.} For any given input $\mathbf{z}_i$ and any
 choice of functions $(g_1, \dots, g_M)$, the minimum can be written as:
 \[
 \min_{m \in [M]} g_m(\mathbf{z}_i) = g_{m^*}(\mathbf{z}_i) \quad \text{where } m^* = \arg\min_{m \in [M]} g_m(\mathbf{z}_i).
 \]
 Instead of summing over all $m$, we decouple the choice of the minimizing head index $m^*$ from the individual head function space. For any fixed sequence of functions $(g_1, \dots, g_M)$, we have:
 \[
 \frac{1}{n} \sum_{i=1}^n \sigma_i \min_{m \in [M]} g_m(\mathbf{z}_i) = \frac{1}{n} \sum_{i=1}^n \sigma_i g_{m_i^*}(\mathbf{z}_i).
 \]

\noindent \textbf{$\alpha$-cover discretization of the minima vector.}
To apply Massart's Lemma over the finite $M$ choices of heads,
we construct an empirical $L_2$-net $\mathcal{N}(\alpha, \mathcal{G}_m, S)$ for each function class $\mathcal{G}_m$
at scale $\alpha > 0$. Hence, for every $g_m \in \mathcal{G}_m$,
there exists a net element $\bar{g}_m \in \mathcal{N}(\alpha, \mathcal{G}_m, S)$ such that:
\[
\frac{1}{n} \sum_{i=1}^n \vert g_m(\mathbf{z}_i) - \bar{g}_m(\mathbf{z}_i)\vert^2 \le \alpha^2.
\]
Then, using Cauchy--Schwarz, the discretization error introduced by replacing $g_{m_i^*}$ with
its closest net element $\bar{g}_{m_i^*}$ is bounded by:
\[
\frac{1}{n} \sum_{i=1}^n \sigma_i \left( g_{m_i^*}(\mathbf{z}_i) - \bar{g}_{m_i^*}(\mathbf{z}_i) \right) \le \sqrt{\frac{1}{n} \sum_{i=1}^n \sigma_i^2} \sqrt{\frac{1}{n} \sum_{i=1}^n \vert g_{m_i^*}(\mathbf{z}_i) - \bar{g}_{m_i^*}(\mathbf{z}_i)\vert^2} \le \alpha.
\]

\noindent \textbf{Decomposing into single head and Massart's finite maximum.}
For any tuple of true functions $(g_1, \dots, g_M) \in \mathcal{G}_1 \times \dots \times \mathcal{G}_M$,
we find their corresponding net approximations $(\bar{g}_1, \dots, \bar{g}_M) \in \mathcal{N}^M$.
The empirical sum splits into:
\[
 \frac{1}{n} \sum_{i=1}^n \sigma_i g_{m_i^*}(\mathbf{z}_i) =
\frac{1}{n} \sum_{i=1}^n \sigma_i \bar{g}_{m_i^*}(\mathbf{z}_i) + \frac{1}{n} \sum_{i=1}^n \sigma_i \left( g_{m_i^*}(\mathbf{z}_i) - \bar{g}_{m_i^*}(\mathbf{z}_i) \right).
\]
Taking the supremum over all valid choices of tuples:
\begin{align*}
&\sup_{(g_1,\dots,g_M)\in\prod_m\mathcal{G}_m}
 \frac{1}{n}\sum_{i=1}^n \sigma_i g_{m_i^*}(\mathbf{z}_i) \\
&\quad\leq
 \sup_{(\bar g_1,\dots,\bar g_M)\in\mathcal{N}^M}
 \frac{1}{n}\sum_{i=1}^n \sigma_i\bar g_{m_i^*}(\mathbf{z}_i) \\
&\qquad+
 \sup_{\substack{g_m\in\mathcal{G}_m\\\bar g_m\in\mathcal{N}}}
 \frac{1}{n}\sum_{i=1}^n \sigma_i
 \left(g_{m_i^*}(\mathbf{z}_i)-\bar g_{m_i^*}(\mathbf{z}_i)\right),
\end{align*}
where the final term is at most $\alpha$,
which gives us the following upper bound on the empirical Rademacher process:
\[
R(S) \le \mathbb{E}_{\boldsymbol{\sigma}} \left[ \sup_{(\bar{g}_1, \dots, \bar{g}_M) \in \mathcal{N}^M} \frac{1}{n} \sum_{i=1}^n \sigma_i \bar{g}_{m_i^*}(\mathbf{z}_i) \right] + \alpha.
\]
Since $m_i^* \in [M]$, we decompose $\bar{g}_{m_i^*}(\mathbf{z}_i)$ into an arbitrary reference head (say $m=1$)
plus a residual vector in $\mathbb{R}^n$ representing the candidate selection among $M$ heads:
\[
\bar{g}_{m_i^*}(\mathbf{z}_i) = \bar{g}_1(\mathbf{z}_i) + \left( \bar{g}_{m_i^*}(\mathbf{z}_i) - \bar{g}_1(\mathbf{z}_i) \right).
\]
Then, taking expectation gives the following decomposition of the empirical Rademacher process:
\[
R(S) \le \underbrace{\mathbb{E}_{\boldsymbol{\sigma}} \left[ \sup_{g_1 \in \mathcal{G}_1} \frac{1}{n} \sum_{i=1}^n \sigma_i g_1(\mathbf{z}_i) \right]}_{\text{Empirical Rademacher of single head } \widehat{\mathcal{R}}_S(\mathcal{G}_1)} + \underbrace{\mathbb{E}_{\boldsymbol{\sigma}} \left[ \max_{m \in [M]} \frac{1}{n} \sum_{i=1}^n \sigma_i \mathbf{v}_m(i) \right]}_{\text{Finite maximum over } M \text{ candidate vectors}} +~\alpha,
\]
where $\mathbf{v}_m(i) = \bar{g}_m(\mathbf{z}_i) - \bar{g}_1(\mathbf{z}_i) \in [-B, B]$
as $\bar{g}_m$ and $\bar{g}_1$ are both in the loss function class $\mathcal{G}_m$
and hence bounded by $B = \log(1/\epsilon)$.

For $M$ vectors $\mathbf{v}_1, \dots, \mathbf{v}_M \in \mathbb{R}^n$  with bounded norm $\Vert \mathbf{v}_m\Vert_2 \le \sqrt{n} B$,
 Massart's Finite Class Lemma \citep[Lemma 5.2]{massart2000some} states that:
\[
\mathbb{E}_{\boldsymbol{\sigma}} \left[ \max_{m \in [M]} \frac{1}{n} \sum_{i=1}^n \sigma_i \mathbf{v}_m(i) \right] \le \max_{m \in [M]} \frac{\Vert \mathbf{v}_m\Vert_2}{n} \sqrt{2 \ln M} \le B \sqrt{\frac{2 \ln M}{n}}.
\]
We apply Talagrand's Contraction Lemma on the single-head term and use the fact that the loss function
is $1/\epsilon$-Lipschitz. In addition, by taking the infimum over all valid cover radii $\alpha > 0$ (or letting $\alpha = 1/\sqrt{n}$),
the residual discretization error $\alpha$
is absorbed into the standard covering-number bound of the individual head complexity $\mathcal{R}_n(\mathcal{F}_{1, \mathbf{y}})$.
This gives us the following upper bound on the empirical Rademacher process:
\[
\mathbb{E}_S [\widehat{\mathcal{R}}_S(\mathcal{G}_1)] \le \frac{1}{\epsilon} \mathcal{R}_n(\mathcal{F}_{1,\mathbf{y}}).
\]
Finally, since the choice of head $m=1$ as reference was arbitrary, we take the maximum over all
$m \in [M]$:
\[
\mathbb{E}_{S, \boldsymbol{\sigma}} \left[ \sup_{f \in \mathcal{F}^M} \frac{1}{n} \sum_{i=1}^n \sigma_i \min_{m \in [M]} \left(-\log f_m(\mathbf{x}_i)_{y_i}\right) \right] \le \frac{1}{\epsilon} \max_{m \in [M]} \mathcal{R}_n(\mathcal{F}_{m, \mathbf{y}}) + B \sqrt{\frac{2 \ln M}{n}}.
\]
Substituting this refined bound for $\mathcal{R}_n(\mathcal{G}_{\text{emax}})$ into the symmetrization argument of 
\cref{app:proof-wta-generalization} and combining with McDiarmid's inequality yields the corollary.
\end{proof}

\section{Dataset Description}
\label{sec:dataset-description}

We use three category subsets of the Amazon~2023 review corpus~\citep{hou2024amazon2023} \textsc{Musical Instruments}, \textsc{Industrial and Scientific}, and \textsc{Video Games}, abbreviated Instrument, Scientific, and Game which span May~1996 to September~2023.
Following prior work~\citep{rajput2023tiger,zhou2020s3rec,zheng2025mtgrec}, we apply the benchmark \emph{5-core} filtering~\citep{amazon2023fivecore}, which iteratively discards users and items with fewer than five interactions.
We then group the interactions by user, order them chronologically, and truncate the histories to $50$ items at both training and test time.
The resulting statistics are reported in \cref{tab:dataset-stats}.

\begin{table}[t]
    \caption{Statistics of the preprocessed datasets~\citep{zheng2025mtgrec}. Avg.\,len is the average length of the user interaction sequences.}
    \label{tab:dataset-stats}
    \centering
    \small
    \begin{tabular}{lrrrrr}
        \toprule
        Dataset & \#Users & \#Items & \#Interactions & Sparsity & Avg.\,len \\
        \midrule
        Instrument & 57{,}439 & 24{,}587 & 511{,}836 & 99.964\% & 8.91 \\
        Scientific & 50{,}985 & 25{,}848 & 412{,}947 & 99.969\% & 8.10 \\
        Game & 94{,}762 & 25{,}612 & 814{,}586 & 99.966\% & 8.60 \\
        \bottomrule
    \end{tabular}
\end{table}

\section{Compared Models}
\label{sec:compared-models}

We detail here the baselines summarized in \cref{sec:empirical-evidence}.
Numbers for the sequential and generative recommenders come from \citet{zheng2025mtgrec}, who train them with RecBole~\citep{zhao2021recbole} under a common protocol and tune them by grid search.
The multi-interest models are trained by us.

\paragraph{ID-based sequential recommenders.}
\begin{enumerate}[nosep,leftmargin=*]
    \item \textbf{Caser}~\citep{tang2018caser} applies horizontal and vertical convolutions to the embedded interaction sequence, capturing both positional and skip-behaviour patterns.
    \item \textbf{HGN}~\citep{ma2019hgn} combines feature-level and instance-level gating to modulate item embeddings according to the user representation.
    \item \textbf{GRU4Rec}~\citep{hidasi2016gru4rec} encodes the interaction sequence with gated recurrent units trained for session-based next-item prediction.
    \item \textbf{BERT4Rec}~\citep{sun2019bert4rec} uses a bidirectional transformer encoder trained with a masked-item objective, so that each position is predicted from both past and future context.
    \item \textbf{SASRec}~\citep{kang2018sasrec} uses a unidirectional (causal) self-attention encoder and predicts the next item by inner product with the item embeddings; it is the closest single-hypothesis analogue of our backbone.
    \item \textbf{HSTU}~\citep{zhai2024hstu} introduces hierarchical sequential transducers that consume actions and timestamps jointly, and is designed for scalability while remaining ID-based.
    \item \textbf{S3-Rec}~\citep{zhou2020s3rec} pre-trains the sequence encoder with self-supervised objectives built from item--attribute correlations, then fine-tunes it for next-item prediction.
\end{enumerate}

\paragraph{Generative recommender.}
\textbf{TIGER}~\citep{rajput2023tiger} quantizes item text embeddings into semantic identifiers with an RQ-VAE and casts recommendation as autoregressive generation of the target identifier, retrieving items by beam search over the identifier space.

\paragraph{Multi-interest recommenders.}
These models represent a user by $M$ vectors instead of one, and are the closest existing analogue of our multi-hypothesis parameterization.
We train them with the sampled-softmax objective and the readout rule of their reference implementations, and rank the catalogue by the maximum inner product over the $M$ interest vectors, which is the retrieval rule all of them adopt.
\begin{enumerate}[nosep,leftmargin=*]
    \item \textbf{MIND}~\citep{li2019mind} extracts interests with a dynamic-routing capsule network over the history and trains the interest whose inner product with the target is largest.
    \item \textbf{ComiRec}~\citep{cen2020comirec} comes in two variants: ComiRec-DR keeps the capsule routing of MIND with a bilinear mapping, while ComiRec-SA replaces it by a multi-head self-attentive pooling of the history.
    \item \textbf{SINE}~\citep{tan2021sine} selects a sparse subset of concepts from a large learned concept pool, forms one interest per selected concept, and aggregates them back into a single user vector used for retrieval.
    \item \textbf{Re4}~\citep{zhang2022re4} augments ComiRec-SA with three backward-flow regularizers---contrasting interests against each other, realigning the attention with the retrieval criterion, and reconstructing the assigned items from each interest.
    \item \textbf{REMI}~\citep{xie2023remi} augments ComiRec-SA with interest-aware hard-negative mining in the sampled softmax and a variance-based regularizer on the item-to-interest routing matrix that prevents routing collapse.
\end{enumerate}

\section{Ablation on the Score-Loss Weight}
\label{app:alpha-ablation}

This appendix measures how much of the performance of COMPOTE comes from the score loss of \cref{eq:score-loss}, whose weight in the training objective is~$\alpha$ (\cref{eq:training-loss}).

\paragraph{Protocol.}
We train COMPOTE on \textsc{Scientific} with $M=3$ for $\alpha\in \{0, 0.05, 0.1, 0.2, 0.3, 0.5, 0.7\}$, 
all other hyperparameters being fixed to the values of \cref{sec:empirical-evidence}.
Rankings use max-similarity aggregation (\cref{eq:pooled-prediction}), which does not read $\hat{\gamma}$ at inference.
The score loss can therefore only change the ranking through its gradient on the shared encoder; the oracle assignments it regresses are detached from the computation graph.
We report N@5 at the best checkpoint.

\paragraph{The score loss helps.}
The best weight, $\alpha=0.1$, reaches $1.91$ N@5 against $1.82$ without the score loss (\cref{fig:alpha-ablation}), and four of the six positive weights improve over $\alpha=0$.
We use $\alpha=0.1$ in all other experiments.

\paragraph{It is not the main contributor.}
With $\alpha=0$, COMPOTE already outperforms every single-hypothesis variant of \cref{tab:results-ablation} on \textsc{Scientific}: $1.82$ N@5 against $1.71$ for MCT-SASRec, $1.64$ for MCT-Router and $1.60$ for MCT-EnsCE.
The gain over single-hypothesis training is therefore obtained by the multiple-hypothesis objective alone.
Moreover, the curve is not monotonic: $\alpha=0.2$ and $\alpha=0.5$ fall below $\alpha=0$, so the effect of the score loss is of the same order as the variation across weights.
The score loss acts as an auxiliary regularizer of the shared encoder rather than as the source of the improvement.

\begin{figure}[!htbp]
    \centering
    \includegraphics[width=0.48\linewidth]{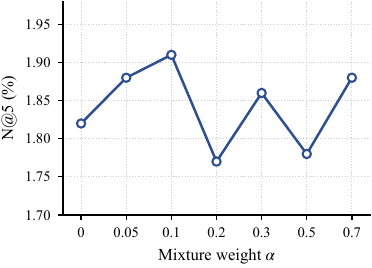}
    \caption{\textbf{Effect of the score-loss weight~$\alpha$.}
    N@5 (\%) on \textsc{Scientific} with $M=3$ under max-similarity aggregation, as a function of the weight~$\alpha$ of the score loss in \cref{eq:training-loss}.
    Ticks are evenly spaced and do not follow the scale of~$\alpha$.
    Without the score loss ($\alpha=0$), COMPOTE already reaches $1.82$, above every single-hypothesis variant of \cref{tab:results-ablation}; the best weight $\alpha=0.1$ adds $0.09$ points.}
    \label{fig:alpha-ablation}
\end{figure}

\section{The Confidence-Score Stabilization}
\label{appendix-section-describing-regularizer}

This appendix formalizes the regularization term mentioned at the end of \cref{sec:learning-problem}.
It is an auxiliary supervised objective attached to the confidence-score head $\hat{\gamma}$ of \cref{fig:mcl-transformer}, which is trained to predict which hypothesis the target item belongs to.

\paragraph{Confidence-score head.}
Recall the architecture of \cref{fig:mcl-transformer}: the shared causal encoder maps the history to the hidden state $\rvz = \mathrm{Enc}(\rvx) \in \lR^d$ at the most recent position, the $M$ hypothesis heads $h_1,\ldots,h_M$ map $\rvz$ to query representations $h_m(\rvx)\in\lR^d$, and the catalogue scores $\rvs_{m,i}(\rvx)$ of \cref{eq:max-similarity-aggregation} are turned into the per-hypothesis distributions $f_{m,\cdot}(\rvx) = \softmax_i\big(\rvs_{m,\cdot}(\rvx)\big) \in \Delta^{N\mhyphen1}$.
In parallel, an MLP taking the same $\rvz$ as input, followed by a coordinate-wise sigmoid, defines the \emph{confidence-score head}
\begin{align}
    \label{eq:confidence-head}
    \hat{\gamma} : \cX \to (0,1)^M\,,
\end{align}
where $\hat{\gamma}_m(\rvx)$ is read as the predicted probability that hypothesis $m$ is the one containing the target of query $\rvx$.
Note that the $M$ outputs are independent sigmoids and not a softmax: $\hat\gamma(\rvx)$ need not sum to one.

\paragraph{Supervision target: belonging to $\cY_m$.}
Following the tessellation of \cref{thm:closed-form}, the empirical cell of hypothesis $m$ on query $\rvx$ is the set of items for which $m$ is the most confident hypothesis,
\begin{align}
    \label{eq:empirical-cell}
    \cY_m(\rvx) = \Big\{ y \in \cY ~:~ \forall m'\neq m,~ -\log f_{m,y}(\rvx) < -\log f_{m',y}(\rvx) \Big\}\,,
\end{align}
so that $(\cY_m(\rvx))_{m\leq M}$ is a partition of $\cY$ up to ties.
The \emph{winning hypothesis} of a training pair $(\rvx,\rvy)$ is the index of the cell that contains the target,
\begin{align}
    \label{eq:winning-head}
    m^\star(\rvx,\rvy)
    = \argmin_{m\leq M} -\log f_{m,\rvy}(\rvx)
    \quad\text{i.e.}\quad
    \rvy \in \cY_{m^\star(\rvx,\rvy)}(\rvx)\,,
\end{align}
which is exactly the hypothesis realizing the $\min$ in $\ell^\textsc{wta}(\rvy, f(\rvx))$.
The regression target of $\hat{\gamma}$ is the corresponding membership vector
\begin{align}
    \label{eq:score-target}
    \gamma^\star(\rvx,\rvy) = \Big( \indicator{\rvy \in \cY_m(\rvx)} \Big)_{m=1}^{M} \in \{0,1\}^M\,,
\end{align}
which is a one-hot vector, with its non-zero coordinate at $m^\star(\rvx,\rvy)$.

\paragraph{Binary cross-entropy on $\hat{\gamma}$.}
The regularizer is the binary cross-entropy between the membership indicators \eqref{eq:score-target} and the predicted confidences, summed over the $M$ coordinates,
\begin{align}
    \label{eq:score-loss}
    \ell^{\rm score}(\rvx, \rvy, \hat{\gamma})
    = -\sum_{m=1}^{M}
    \Big[
        \indicator{\rvy \in \cY_m(\rvx)} \log \hat{\gamma}_m(\rvx)
        + \indicator{\rvy \notin \cY_m(\rvx)} \log\big(1 - \hat{\gamma}_m(\rvx)\big)
    \Big]\,,
\end{align}
which, the target being one-hot, can be written equivalently as
\begin{align}
    \ell^{\rm score}(\rvx, \rvy, \hat{\gamma})
    = -\log \hat{\gamma}_{m^\star(\rvx,\rvy)}(\rvx)
      - \sum_{m \neq m^\star(\rvx,\rvy)} \log\big(1 - \hat{\gamma}_m(\rvx)\big)\,.
\end{align}
The cells \eqref{eq:empirical-cell}, and hence the labels \eqref{eq:score-target}, depend on $f$: they are recomputed at every training step from the current hypotheses, and are detached from the computation graph, so that no gradient flows from $\ell^{\rm score}$ back to the heads $h_m$ through $m^\star$.
The only gradient path of $\ell^{\rm score}$ is through $\hat{\gamma}$, i.e. through the confidence MLP and the shared encoder producing $\rvz$.

\paragraph{Full training objective.}
Weighting this term by $\alpha > 0$, the objective minimized by COMPOTE is
\begin{align}
    \label{eq:training-loss}
    \cL_{\tau,\alpha}(f,\hat{\gamma})
    = \lE_{\rvx,\rvy}\Big[ \ell_\tau\big(\rvy, f(\rvx)\big) \Big]
      + \alpha\, \lE_{\rvx,\rvy}\Big[ \ell^{\rm score}(\rvx, \rvy, \hat{\gamma}) \Big]\,,
\end{align}
with $\ell_\tau$ the annealed loss of \cref{eq:loss-free-energy} -- equivalently $\ell^{\rm aMCL}_\tau$ of \cref{eq:awta} by \cref{prop:loss-tau-to-awta,cor:awta-gradient}.
We use $\alpha = 0.1$ in all experiments; the sensitivity to $\alpha$ is reported in \cref{app:alpha-ablation}.

\section{Impact of the number of hypotheses $M$ on learning}
\label{appendix-section-impact-of-M}

In  \cref{cor:wta-generalization-refined} in \cref{sec:learning-problem}, we have established
that the generalization  gap between the empirical WTA loss and the expected WTA loss has a sublinear
dependence on the number of hypotheses $M$. We present here an empirical validation of
this theoretical insight.

Keeping the experimental setup used for performance evaluation described in \cref{sec:empirical-evidence},
we vary the number of hypotheses $M$ from 1 to 12 and measure the ranking quality on \textsc{Scientific} using N@5.
\cref{fig:num-hypotheses} shows that increasing $M$ from 1 to 3 improves ranking quality,
which suggests that having multiple hypotheses is beneficial for capturing the ambiguity
of the next-item distribution. However, beyond $M=3$, the performance plateaus,
indicating that adding more hypotheses does not necessarily lead to better ranking quality.
This empirical finding  is consistent with the sublinear dependence on~$M$ in \cref{cor:wta-generalization-refined},
and provides practical guidance for selecting an appropriate number of hypotheses in real-world applications.

For those larger value of $M$, a phenomenom we have described in
\cref{prop:emax-ranking} in \cref{sec:compote-to-ranking} can also occur,
which is the saturation of the max-similarity aggregation $\omega$.
\cref{sec:compote-to-ranking}.
We have stated that this saturation can happen when at least two items
have large probabilities $\rvq_a^\star, \rvq_b^\star \geq 1/M$.
This means that as $M$ increases, the probability of this saturation event increases,
which can lead to a degradation of the ranking quality.

 \begin{figure}[h]
    \centering
    \includegraphics[width=0.48\linewidth]{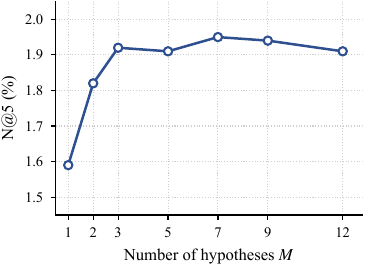}
    \caption{\textbf{Effect of the number of hypotheses.}
    N@5 on \textsc{Scientific} under max-similarity aggregation, as a function of~$M$.
    Performance improves sharply from $M=1$ to $M=3$, then plateaus: increasing $M$ from 3 to 12 does not degrade ranking quality, in line with the sublinear dependence on~$M$ in \cref{cor:wta-generalization-refined}.}
    \label{fig:num-hypotheses}
\end{figure}

\section{Collapse and Oracle Evidence}
\label{app:collapse-evidence}

\begin{table}[t]
    \caption{Mean pairwise cosine similarity between hypothesis heads on the test set.
    A value of $1$ means the heads are indistinguishable, lower is more specialized.}
    \label{tab:head-collapse}
    \centering
    \small
    \begin{tabular}{lccc}
        \toprule
        Method & Instrument & Scientific & Game \\
        \midrule
        MCT-EnsCE & 1.00 & 1.00 & 1.00 \\
        MCT-Router & 1.00 & 1.00 & 1.00 \\
        \midrule
        COMPOTE (ours) & \textbf{0.64} & \textbf{0.61} & \textbf{0.58} \\
        \bottomrule
    \end{tabular}
\end{table}

\begin{table}[h]
    \caption{NDCG@5 (\%) under max-similarity aggregation (\cref{eq:max-similarity-aggregation}) and under oracle head selection (\cref{eq:oracle-head}).
    Equal columns indicate that all heads induce the same ranking, i.e.\ complete collapse.}
    \label{tab:oracle-gap}
    \centering
    \small
    \setlength{\tabcolsep}{5pt}
    \begin{tabular}{lcccccc}
        \toprule
        & \multicolumn{2}{c}{Instrument} & \multicolumn{2}{c}{Scientific} & \multicolumn{2}{c}{Game} \\
        \cmidrule(lr){2-3} \cmidrule(lr){4-5} \cmidrule(lr){6-7}
        Method & Aggregated & Oracle & Aggregated & Oracle & Aggregated & Oracle \\
        \midrule
        MCT-EnsCE & 2.28 & 2.28 & 1.60 & 1.60 & 3.78 & 3.78 \\
        MCT-Router & 2.28 & 2.28 & 1.64 & 1.64 & 3.78 & 3.78 \\
        \midrule
        COMPOTE (ours) & \textbf{2.45} & \textbf{3.75} & \textbf{1.92} & \textbf{3.52} & \textbf{4.01} & \textbf{7.78} \\
        \bottomrule
    \end{tabular}
\end{table}

\section{Proofs of \cref{sec:hypothesis-specialization}}

\subsection{Proof of \cref{thm:closed-form}}
\label{ssec:thm:proof:closed-form}

\thmClosedForm*
\begin{proof}[Proof of \cref{thm:closed-form}]
    The characterization of $\rvp^\star$ using $Q_m$ and the tessalation $\cY_m$ are a direct application of \cref{thm:closed-form}.

    The only part remaining is to prove the tessalation $\cY_m$ is entropy-maximizing. Given a tessalation $C= \{C_1,\dots, C_m\}$ of $\cY$, define
    \begin{align}
        \rvp^C_{m,y} = \frac{\rvq^\star_y}{Q^C_m}\indicator{y\in C_m}
        \quad\text{where}\quad
        Q^C_m = \sum_{y\in C_m} \rvq^\star_y
    \end{align}
    \begin{align}
        \lE_{\rvy\sim\rvq^\star}[\ell^\textsc{wta}(\rvy, \rvp^C)]
        & = -\sum_{m=1}^M \sum_{y\in C_m} \rvq^\star_y \log(\rvp^C_{m,y})\\
        & = -\sum_{m=1}^M \sum_{y\in C_m} \rvq^\star_y \log\left(\frac{\rvq^\star_y}{Q^C_m}\right)\\
        & = H(\rvq^\star) - H(Q^C)
    \end{align}
    The choice $C^\star = (\cY_m)_{m=1}^M$ coincides with the optimal predictor -- i.e. $\rvp^{C^\star} = \rvp^\star$, leading to
    \begin{align}
        \lE_{\rvy\sim\rvq^\star}&[\ell^\textsc{wta}(\rvy, \rvp^{C^\star})] = \min_{C:\text{M-tessalation of}\cY}\!\!\!\! \lE_{\rvy\sim\rvq^\star}[\ell^\textsc{wta}(\rvy, \rvp^C)]\\
        &\Leftrightarrow H(\rvq^\star) - H(Q^{C^\star}) = \min_{C:\text{M-tessalation of}\cY}\!\!\!\! H(\rvq^\star) - H(Q^C)\\
        &\Leftrightarrow H(Q^{C^\star}) = \max_{C:\text{M-tessalation of}\cY}\!\!\!\! H(Q^C)
    \end{align}
\end{proof}

\subsection{Optimal Risk}
\label{ssec:cor:closed-form-risk}

The following corollary makes explicit the value of the optimal winner-takes-all risk.

\begin{restatable}[Optimal winner-takes-all risk]{corollary}{corClosedFormRisk}
    \label{cor:closed-form-risk}
    Under the setting of \cref{thm:closed-form}, with $Q=(Q_m)_{m=1}^M$ the cell masses of the tessalation $(\cY_m)_{m=1}^M$,
    \begin{align}
        \label{eq:closed-form-risk}
        \min_{\rvp\in\left(\Delta^{N\mhyphen1}\right)^M} \lE_{\rvy\sim\rvq^\star}[\ell^\textsc{wta}(\rvy, \rvp)]
        = H(\rvq^\star) - H(Q)
        \geq H(\rvq^\star) - \log M\,,
    \end{align}
    with equality in the inequality if and only if $Q_m = 1/M$ for every $m\leq M$.
\end{restatable}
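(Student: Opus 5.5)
The plan is to read off the optimal value from the proof of \cref{thm:closed-form}, and then bound the entropy of the cell-mass vector $Q$.

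\textbf{Step 1: value of the optimal risk.} I will use the computation in the proof of \cref{thm:closed-form}. For any $M$-tessellation $C$ of $\cY$, the induced predictor $\rvp^C$ satisfies
\begin{align*}
\lE_{\rvy\sim\rvq^\star}[\ell^\textsc{wta}(\rvy, \rvp^C)] = H(\rvq^\star) - H(Q^C).
\end{align*}
The argument is to split $\log(\rvq^\star_y/Q^C_m)$ into $\log \rvq^\star_y - \log Q^C_m$ and to sum $\rvq^\star_y$ over $y\in C_m$ to produce $Q^C_m$. The theorem gives $\rvp^{C^\star}=\rvp^\star$, where $C^\star=(\cY_m)_{m\le M}$. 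Since $\rvp^\star$ attains the minimum, the minimal risk equals $H(\rvq^\star)-H(Q)$ with $Q=Q^{C^\star}$.

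There is one subtlety: items $y$ with $\rvq^\star_y=0$ contribute nothing, and we use the convention $0\log 0=0$. Also, any cells with $Q_m=0$ must not break the formula; they contribute $\phi(0)=0$ to $-H(Q)$, consistent with the convention $\phi(t)=t\log t$ already used in the proof of \cref{prop:emax-ranking}.

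\textbf{Step 2: the inequality.} The vector $Q=(Q_m)_{m=1}^M$ lies in $\Delta^{M\mhyphen1}$, because the cells partition $\cY$ and $\rvq^\star$ sums to one. The standard bound $H(Q)\le\log M$ then gives $H(\rvq^\star)-H(Q)\ge H(\rvq^\star)-\log M$. I will prove $H(Q)\le \log M$ either by Jensen's inequality on the concave $\log$,
\begin{align*}
H(Q)=\sum_{m:Q_m>0} Q_m\log\frac{1}{Q_m}\le \log\sum_{m:Q_m>0} 1\le \log M,
\end{align*}
or by Gibbs' inequality $\mathrm{KL}(Q\,\|\,u)=\log M - H(Q)\ge 0$, where $u$ is the uniform distribution on $[M]$.

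\textbf{Step 3: equality case.} Equality holds if and only if $\mathrm{KL}(Q\,\|\,u)=0$, that is, $Q=u$. Equivalently, in the Jensen route, equality requires both that all $Q_m>0$ and that $1/Q_m$ is constant, so $Q_m=1/M$ for every $m$. The converse is immediate, since $H(u)=\log M$.

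\textbf{Main obstacle.} I expect the only delicate step to be Step 1: justifying that the minimum equals $H(\rvq^\star)-H(Q)$ exactly, including the degenerate cells and zero-mass items, which relies on \cref{thm:closed-form} identifying $\rvp^\star$ with $\rvp^{C^\star}$. The entropy bound in Steps 2 and 3 is routine.
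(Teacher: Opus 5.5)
Your proposal is correct and follows the paper's proof. Like the paper, you identify $\rvp^\star$ with $\rvp^{C^\star}$ via \cref{thm:closed-form}, reuse the computation from that theorem's proof to obtain $H(\rvq^\star)-H(Q)$, and conclude with $H(Q)\le\log M$, with equality exactly when $Q$ is uniform; your remarks on zero-mass items and cells (the $0\log 0=0$ convention) and your Jensen/Gibbs justification just make explicit what the paper leaves implicit.
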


\begin{proof}[Proof of \cref{cor:closed-form-risk}]
    By \cref{thm:closed-form}, $\rvp^\star = \rvp^{C^\star}$ with $C^\star = (\cY_m)_{m=1}^M$, and the computation in the proof of \cref{thm:closed-form} gives
    \begin{align}
        \min_{\rvp\in\left(\Delta^{N\mhyphen1}\right)^M} \lE_{\rvy\sim\rvq^\star}[\ell^\textsc{wta}(\rvy, \rvp)]
        = \lE_{\rvy\sim\rvq^\star}[\ell^\textsc{wta}(\rvy, \rvp^{C^\star})]
        = H(\rvq^\star) - H(Q)\,.
    \end{align}
    Since $Q\in\Delta^{M\mhyphen1}$, $H(Q)\leq\log M$, with equality if and only if $Q$ is uniform.
\end{proof}

\subsection{The Loss Follows the $\log M$ Law}
\label{ssec:log-m-law}

\Cref{thm:closed-form} also predicts how the loss should change with $M$.
For a given query, the optimal winner-takes-all loss is the Bayes loss minus the entropy $H(Q(\rvx))$ of the cell masses (\cref{cor:closed-form-risk}).
Exact querywise balance would therefore yield a $\log M$ reduction relative to the Bayes loss.
Although the win frequencies of \cref{sec:what-hypotheses-learn} establish only marginal balance, empirically $\hat\cL^\textsc{wta}+\log M$ varies by less than $0.1$ across $M\in\{3,4,5\}$ (\cref{tab:latent-tiling-theory}), showing that the learned models approximately follow this scaling law, where $\hat\cL^\textsc{wta}=\lE[\min_m -\log f_{m,\rvy}(\rvx)]$ is the test loss.
Equivalently, going from $M=3$ to $M=5$ multiplies the geometric-mean probability that the winning head assigns to the target by $e^{0.61}\approx1.83$, close to and slightly above the ratio $5/3\approx1.67$ predicted under exact balance.

\section{Latent Tiling Across $M$}
\label{app:latent-tiling}

This appendix details the protocol behind \cref{tab:latent-tiling-theory}, reports additional diagnostics on the three COMPOTE checkpoints trained on \textsc{Game} with $M\in\{3,4,5\}$, and checks that the partition also appears on \textsc{Scientific}.

\paragraph{Protocol.}
Win frequencies, static sectors, and sector agreement use the full test set of $94{,}762$ query--target pairs.
In-cell mass and the winner-takes-all loss use the softmax $f_m(\rvx)$ over the whole catalogue, with $\beta=0.07$, on the same full test set.
The in-cell mass is $\sum_{y\in\cY_m(\rvx)} f_{m,y}(\rvx)$, averaged over heads and test pairs.
The static direction $\widetilde{\rvh}_m$ of head $m$ is the renormalized average of its normalized outputs over the test set.
Writing $\bar{\rve}_i=\rve_i/\|\rve_i\|_2$ for the normalized embedding of item $i$, its sector is $\argmax_m \langle \bar{\rve}_i,\widetilde{\rvh}_m\rangle$.
Sector agreement is the fraction of test targets whose sector is their winning head.
Cross-head Jaccard, region loyalty, and single-head reach are computed from the top-$10$ lists on $4{,}000$ test queries sampled uniformly.
Among catalogue items retrieved at least once, an item is reached by a single head if it appears in the top-$10$ of exactly one head.
Within-head overlap compares the top-$10$ list of a head between randomly paired queries from this sample.

\paragraph{Cosine and softmax winners agree.}
The oracle of \cref{eq:oracle-head} selects heads by cosine similarity, whereas the cells of \cref{thm:closed-form} are defined by probabilities, and each head carries its own softmax normalizer.
The two definitions of the winning head agree on $98.9\%$, $99.4\%$, and $99.7\%$ of test pairs for $M=3,4,5$.

\paragraph{The tiling sharpens with $M$.}
The static directions are well separated for every $M$ (\cref{fig:latent-tiling-ablation}).
Cross-head top-$10$ Jaccard falls from $0.008$ at $M=3$ to $0.001$ at $M=5$, while region loyalty rises from $98.9\%$ to $99.8\%$ (\cref{tab:latent-tiling-ablation}).
\Cref{thm:closed-form} predicts an exact partition for every $M$, so this trend describes how close training gets to the optimum rather than a property of the optimum itself.

\paragraph{The same partition appears on \textsc{Scientific}.}
We repeat the overlap diagnostics on $4{,}000$ sampled test queries for the COMPOTE checkpoint trained on \textsc{Scientific} with $M=3$.
The top-$10$ lists of two heads on the same query have a mean Jaccard overlap of $0.001$, and $94.5\%$ of the catalogue items retrieved at test time are reached by a single head only.
The top-$10$ of a given head overlaps by $0.025$ to $0.044$ Jaccard between two different histories, so each head still adapts to the query.
\Cref{fig:data-visualisation} shows three test queries.
A plain PCA of the normalized catalogue already reveals three arms, and the heads sit along distinct arms, with the target close to one of them.

\begin{table}[t]
    \caption{Latent-space tiling on \textsc{Game} as the number of hypotheses $M$ varies.
    Mean off-diagonal cosine similarity is computed between the static head directions $\widetilde{\rvh}_m$ (the dataset-wide average of normalized head outputs).
    The top-$10$ diagnostics use $4{,}000$ test queries sampled uniformly without replacement.
    Cross-head Jaccard is the mean pairwise overlap of these lists across heads.
    Region loyalty is the fraction of a head's top-$10$ items that fall in its catalogue sector.}
    \label{tab:latent-tiling-ablation}
    \centering
    \small
    \begin{tabular}{cccc}
        \toprule
        $M$ & Mean off-diag.\ $\cos(\widetilde{\rvh})$ & Cross-head top-$10$ Jaccard & Region loyalty \\
        \midrule
        3 & $-0.09$ & $0.008$ & $98.9\%$ \\
        4 & $-0.17$ & $0.003$ & $99.4\%$ \\
        5 & $-0.04$ & $0.001$ & $99.8\%$ \\
        \bottomrule
    \end{tabular}
\end{table}

\begin{figure}[!htbp]
    \centering
    \begin{subfigure}[t]{0.32\textwidth}
        \centering
        \includegraphics[width=\linewidth]{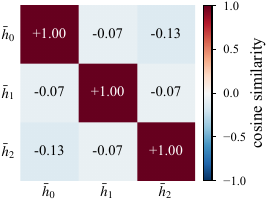}
        \caption{$M=3$}
        \label{fig:latent-tiling-game-3}
    \end{subfigure}\hfill
    \begin{subfigure}[t]{0.32\textwidth}
        \centering
        \includegraphics[width=\linewidth]{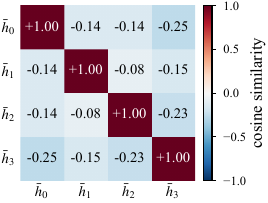}
        \caption{$M=4$}
        \label{fig:latent-tiling-game-4}
    \end{subfigure}\hfill
    \begin{subfigure}[t]{0.32\textwidth}
        \centering
        \includegraphics[width=\linewidth]{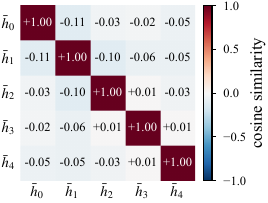}
        \caption{$M=5$}
        \label{fig:latent-tiling-game-5}
    \end{subfigure}
    \caption{\textbf{Latent-space tiling on \textsc{Game}.}
    Cosine similarity between the static head directions $\bar{\rvh}_m$ (the dataset-wide average of normalized head outputs) for $M\in\{3,4,5\}$.
    Off-diagonal values remain near zero or negative, indicating that the heads occupy separated directions rather than collapsing onto one another.}
    \label{fig:latent-tiling-ablation}
\end{figure}

\begin{figure}[!htbp]
    \centering
    \begin{subfigure}[t]{0.32\textwidth}
        \centering
        \includegraphics[width=\linewidth]{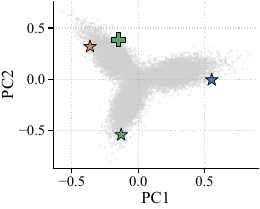}
        \label{fig:data-visualisation-4934}
    \end{subfigure}\hfill
    \begin{subfigure}[t]{0.32\textwidth}
        \centering
        \includegraphics[width=\linewidth]{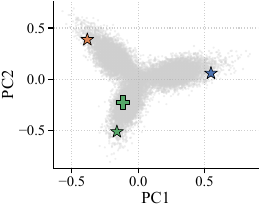}
        \label{fig:data-visualisation-23025}
    \end{subfigure}\hfill
    \begin{subfigure}[t]{0.32\textwidth}
        \centering
        \includegraphics[width=\linewidth]{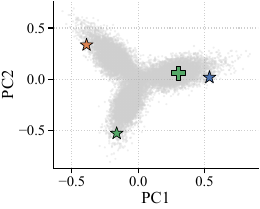}
        \label{fig:data-visualisation-26314}
    \end{subfigure}
    \caption{\textbf{Catalogue partition on \textsc{Scientific} ($M=3$).} PCA projections on three test sequences (queries~4934, 23025, and 26314) under COMPOTE.
    Each panel shows the L2-normalized item catalogue in two dimensions (grey), the target next item (green cross), and the three head query embeddings (stars).
    The heads occupy distinct regions of the catalogue rather than collapsing onto a single direction.}
    \label{fig:data-visualisation}
\end{figure}

\section{Beyond the Theorem: a Static Tiling}
\label{app:static-tiling}

\Cref{thm:closed-form} allows the cells to depend on the query.
To test whether they do, we average the normalized outputs of each head over the test set and assign every catalogue item to the static sector of the direction it is most aligned with (protocol in \cref{app:latent-tiling}).
The sector of the target predicts its winning head on $98.5\%$ to $99.4\%$ of test pairs (last column of \cref{tab:latent-tiling-theory}), and more than $98.9\%$ of each head's top-$10$ items lie in its sector.
Cell membership is thus almost a function of the item alone.
COMPOTE learns a fixed tiling of the catalogue, and only the ranking inside each tile adapts to the history, as (P2) requires.
A static tiling cannot balance every query: for a user whose likely next items all lie in one tile, $Q(\rvx)$ is concentrated on a single head, and that head gets no boost from the partition.
The balance reported in \cref{sec:what-hypotheses-learn} holds on average over queries, which is what win frequencies measure.





\end{document}